\documentclass[11pt,a4paper]{article}
\usepackage[utf8]{inputenc}
\usepackage[T1]{fontenc}
\usepackage{amsmath, amssymb, amsthm}
\usepackage{hyperref}
\usepackage{booktabs}
\usepackage{graphicx}
\usepackage{placeins}
\usepackage[a4paper,hmargin=2.5cm,vmargin=2.5cm,verbose]{geometry}

\newtheorem{theorem}{Theorem}
\newtheorem{definition}{Definition}
\newtheorem{proposition}{Proposition}
\newtheorem{lemma}{Lemma}
\newtheorem{corollary}{Corollary}
\newtheorem{remark}{Remark}
\newtheorem{assumption}{Assumption}

\begin{document}

\title{A Theory of Conditional Collapse under \\ Low-Rank Weight-Space Ablations: \\ I. The Single-Block Theory and Synthetic Validation}
\author{Abdallah Khemais \\ \textit{ISITCOM, University of Sousse}}
\date{July 2026}

\maketitle

\begin{abstract}
Activation patching and weight-space ablation are both used to argue that a
component of a network is causally responsible for a behavior, yet they act
on different objects: one forward pass, versus the parameters behind every
forward pass. We ask when they agree.

We study an idealized model in which a conditional computation is carried
additively through a residual stream, $F(x)=F_0(x)+\sum_i\alpha_i(x)v_i$, and
read out by a linear functional, and prove three exact results. First,
deleting a subset of the carriers $v_i$ collapses a matched input pair onto
one and the same unconditional output \emph{if and only if} the removal is
symmetric on the pair and leaves no contrast outside it; the resulting error
is deterministic, with polarity the sign of the pair's mean margin. Since
those two conditions are exact equalities, we also give the exact error
identity when they hold only approximately, so the criterion degrades
gracefully rather than describing a null set. Second,
patching a carrier moves the readout by that carrier's donor--receiver
\emph{contrast}, whereas ablating it moves the readout by its \emph{absolute
level} at the receiver. Neither bounds the other, and we construct matched
pairs on which every single-carrier patch flips the decision while no
single-carrier ablation does: the redundancy regime in which patching
overstates importance and ablation understates it. Third, for an attention
head composed with its own layer's normalization and MLP, we derive an exact
first-order formula, with a provably second-order remainder, for the
interaction term the idealized model sets to zero, and show it vanishes
identically whenever the MLP alone is ablated but not, in general, when a
head is; we connect it to the idealized selector's own error against the
true network.

Small transformers trained on a synthetic conditional task illustrate all
three predictions. Over thirty-nine ablation configurations the measured
interaction is strongly rank-correlated with how well the idealized model
predicts the edited network's behavior (Spearman $-0.83$); a clean
separation we first read off fourteen of those configurations does
\emph{not} survive on the remaining twenty-five, and we report the weaker
monotone claim that does. A second task and architecture (an inverse,
value-to-key recall the first task never requires, on a smaller network)
reproduces the same monotone relationship, the same patch-equals-weight-edit
exactness, and a further instance of the polarity reversal.

The single-block interaction result derived here extends past one residual
block, and the synthetic validation reported here is tested against a real
pretrained model, in a companion analysis that takes the present theory
further along both axes.
\end{abstract}

\section{Introduction}
\label{sec:intro}

Mechanistic interpretability certifies that a component of a network is
causally responsible for a behavior by intervening on it, and two families of
intervention dominate. \emph{Activation patching} replaces a component's
activation, inside a single forward pass, by the value it takes on a second
input, and asks whether the model's decision follows
\cite{vig2020,meng2022,wang2023}. \emph{Weight-space ablation} instead edits
the parameters (zeroing a component, or projecting a low-rank direction out
of a weight matrix \cite{arditi2024}) and asks the same question of the
edited network. Both are read as evidence for the same informal claim, that a
given component carries a given behavior, and both are frequently interpreted
through a common lens of causal abstraction \cite{geiger2021}.

The two interventions are nevertheless different operators on different
objects: patching alters one realized computation, ablation alters the
parameters that generate every computation. Nothing in current practice
specifies when they should agree, and there is direct evidence that they need
not. The \emph{Hydra effect} \cite{mcgrath2023} documents that ablating an
attention layer in a language model causes downstream layers to change their
behavior and compensate; an ablation-based importance score is therefore
measured on a network that has already partly repaired the damage, and can
fall far below the importance the same component is assigned by an
intervention that does not give the rest of the network that opportunity.
Redundant, superposed codes \cite{elhage2022} make the same mismatch
structurally likely rather than accidental.

This paper makes the mismatch exact. We work in an idealized model, the
\emph{abstract conditional model} of Section~\ref{sec:model}, in which a
conditional computation is carried additively through a residual stream,
\[
F(x) \;=\; F_0(x) + \sum_{i=1}^{k}\alpha_i(x)\,v_i,
\]
with $F_0$ the unconditional part, $v_i$ fixed directions, $\alpha_i$ scalar
coefficients (we call the pair $(v_i,\alpha_i)$ a \emph{carrier}), and read
out by a linear functional, $s(x)=\psi(F(x))+b$, whose sign is the decision.
Fix a \emph{matched pair} $(x_A,x_B)$, two inputs that should receive the
same answer through opposite branches of the conditional, with margins
$g_A=s(x_A)>0>g_B=s(x_B)$, and write $\beta_i:=\psi(v_i)$. Our results are
then the following, in the order in which they are proved.

\paragraph{Conditional collapse (Section~\ref{sec:collapse}).} Deleting the
carriers indexed by a subset $S$ maps \emph{both} members of the pair to one
and the same value $\bar s=\tfrac12(g_A+g_B)$ if and only if two conditions
hold: the removed mass is symmetric on the pair ($\bar q_S=0$), and no
contrast survives outside $S$ ($\Psi_S=0$). When they hold, exactly one input
is misclassified, its error is deterministic rather than noisy, the surviving
branch is $\operatorname{sign}\bar s$, and any two subsets satisfying the
criterion produce the identical collapse.

\paragraph{Patching--ablation dissociation (Section~\ref{sec:dissociation}).}
Patching carrier $i$ from donor to receiver moves the readout by exactly
$\beta_i\delta_i$ with $\delta_i:=\alpha_i(x_A)-\alpha_i(x_B)$ (the
carrier's \emph{contrast}), whereas ablating carrier $i$ moves it by exactly
$-\beta_i\alpha_i(x_B)$ (the carrier's \emph{absolute level} at the
receiver). Neither quantity bounds the other. We construct matched pairs with
$n\ge2$ carriers on which every single-carrier patch flips the decision while
no single-carrier ablation does, which is precisely the regime that produces
patch recoveries exceeding $1$ (overshoot) alongside ablations that appear to
show the component is dispensable: the shape of the self-repair
observations above, obtained here with no repair mechanism at all.

\paragraph{Nonlinear interaction (Section~\ref{sec:interaction}).} The
idealized model treats carriers as independent. For the one composition where
that is architecturally false (an attention head and its own layer's MLP,
separated by an RMSNorm that reads the head's output), we compute the error
exactly:
\[
\Delta(x) \;=\; (I-Q)\Bigl[-Dg\bigl(r_1(x)\bigr)\eta(x) - R(x)\Bigr],
\qquad \|R(x)\|\le\Lambda\|\eta(x)\|^2 ,
\]
where $\eta(x)$ is the head's ablated write-in, $Q$ the MLP's ablation
projector, $g$ the normalization--MLP composition, and the displayed
remainder bound holds under the bounded-curvature hypothesis stated in
Theorem~\ref{thm:interaction}, a hypothesis whose constant $\Lambda$ can in
fact be exhibited in closed form, from the trained weights alone, as a
companion analysis to this one shows, so that it need not be assumed
unverified. The remainder is provably second order, and $\Delta$ vanishes
identically whenever the MLP alone is ablated ($\eta(x)\equiv0$), but not in
general otherwise: ablating the head alone perturbs its own block's MLP
output regardless of whether the MLP is separately touched, and we show
precisely what this costs the first two theorems' predictions when applied
to the true network rather than the idealized one.

\medskip
Section~\ref{sec:experiments} illustrates all three predictions on small
transformers trained on a synthetic conditional task. That section is
deliberately minimal: its purpose is to show that the predictions are
realized in a trained network rather than vacuous under the stated
assumptions, not to conduct a validation campaign. It also reports a negative
result we consider load-bearing. An apparent clean separation, on which the
interaction magnitude split fourteen ablation configurations into those the
idealized model describes and those it does not with an empty band between,
does not survive when the same frozen criterion is applied to twenty-five
further configurations from the same networks; what survives is a strong
monotone association rather than a threshold, and we state the claim at that
strength. A second finding is more structural than surprising, and we report
it for the same reason: the robust collapse criterion of
Section~\ref{sec:robust} is a \emph{sufficient} tolerance condition, and on
these networks it is met by only one matched pair in ten, and by no single
configuration on more than $71.7\%$ of its pairs, because the condition
compares two quantities of unequal, architecturally fixed scale, not because
the underlying collapse fails to occur, which it does on four times as many
pairs as the criterion certifies. To check that neither the theory's
predictions nor this section's own negative results are an artifact of the
marker task specifically, Section~\ref{sec:second-task} repeats the checks
above on a second, freshly trained instance with a different conditional
mechanism (inverse recall rather than a fixed relabeling) and a smaller
architecture; the predictions replicate, including a further instance of the
polarity reversal, though the carrier count needed widening past this
section's own threshold to give the interaction/fidelity check enough points
to be informative, itself reported rather than smoothed over.
The single-block result of Theorem~\ref{thm:interaction} itself extends past
one block, to an exact identity for the interaction between two arbitrarily
distant layers, and is tested, together with the collapse and dissociation
predictions above, on an emergent circuit found, not designed, in a real
pretrained model; we develop both extensions in a companion analysis rather
than here, so as to keep the present paper to the single-block theory and its
synthetic-task validation.
Section~\ref{sec:discussion} separates what is established from what the
model leaves open.

\section{Related Work}
\label{sec:related}

\paragraph{Causal interventions as an interpretability method.} Patching an
activation from one run into another, and reading the change in the output,
originates in causal mediation analysis applied to language models
\cite{vig2020} and was scaled into causal tracing for factual recall
\cite{meng2022} and into full circuit reconstructions
\cite{wang2023}. Refinements restrict the intervention to individual paths
through the computation graph \cite{goldowsky2023}, automate the search over
candidate edges \cite{conmy2023}, and examine how sensitive the resulting
conclusions are to the choice of corrupted input and evaluation metric
\cite{zhang2024}. Causal abstraction supplies the semantics under which such
an experiment counts as evidence that a network implements a given
high-level algorithm \cite{geiger2021}, including when the high-level
variable is realized in a distributed, rotated subspace rather than in a
single neuron \cite{geiger2024}. The present paper takes this methodology as
given: we do not propose a new intervention or a new search procedure, and
the abstract conditional model of Section~\ref{sec:model} is an
abstraction in exactly the sense of \cite{geiger2021}, specialized to a
binary readout and an additively carried conditional.

\paragraph{Weight-space edits and the additive residual stream.} A separate
line of work intervenes on parameters instead of activations, most directly
by projecting a single low-rank direction out of the weight matrices that
write into the residual stream \cite{arditi2024}. The decomposition
$F=F_0+\sum_i\alpha_i v_i$ we start from is the standard reading of the
residual stream as a sum of component write-ins \cite{elhage2021}, with the
$v_i$ playing the role of the output directions of individual heads or MLPs.
What the present paper adds is not this decomposition but a criterion,
internal to it, for when \emph{removing} a subset of the write-ins forces the
readout to a single value on two inputs that previously separated.

\paragraph{When an intervention misleads.} Several results establish that a
causal intervention can support a conclusion the network does not warrant.
Subspace activation patching can flip a model's behavior through a direction
that the unperturbed model does not use, an interpretability illusion that
survives the usual sufficiency and necessity checks \cite{makelov2024}.
Ablating a component can be compensated by downstream components, so that
measured importance reflects the network's self-repair as much as the
component's role \cite{mcgrath2023}, and redundant or superposed codes
\cite{elhage2022} give a representational reason to expect exactly this.
These are, respectively, an argument that patching can overstate and an
argument that ablation can understate. Our contribution is to place both in
one model and make the gap between them an identity rather than a caution:
Theorem~\ref{thm:dissociation} names the two quantities the two interventions
actually measure, $\beta_i\delta_i$ and $\beta_i\alpha_i(x_B)$, and
Corollary~\ref{cor:sufficiency} exhibits, by explicit construction, matched
pairs on which every single-carrier patch is sufficient while no
single-carrier ablation is necessary.

\paragraph{Concurrent work.} Three papers appearing while this one was being
prepared reach neighbouring conclusions from the activation side.
Vaidyanathan et al.\ \cite{vaidyanathan2026mediators} re-derive the
activation-patching estimand from causal mediation analysis and show that the
natural indirect effect attributed to a component also carries an interaction
term measuring how that component's effect depends on the state of the others;
they prove it scales with the clean-to-patched activation distance, vanishes
when the model is locally affine, and decomposes combinatorially. Their INT is
the activation-space counterpart of the term
Theorem~\ref{thm:interaction} isolates here in weight space, and their
locally-affine condition is the qualitative form of the curvature constant
bounded here in closed form; the two analyses are complementary rather than
competing, since the intervention being analyzed is not the same one.
Gong et al.\ \cite{gong2026coablation} show empirically that dormant backup
components activated by an ablation corrupt first-order importance scores,
which is the mechanism our own understatement argument predicts, measured at
scale on nine models. Guo et al.\ \cite{guo2026sobol} separate ``transports
task-relevant content'' from ``computation degrades when removed'' through
paired interventions; that separation is the empirical shadow of the
dissociation Theorem~\ref{thm:dissociation} states as an identity, though
their ablation is a zeroing in activation space rather than a low-rank edit of
the weights.

\paragraph{What is new here.} Relative to the above, this paper contributes
(i) an \emph{if and only if} criterion for a weight-space ablation to collapse
a conditional onto a single unconditional branch, together with the polarity
of the resulting error (Theorem~\ref{thm:collapse}); (ii) a constructive
separation between patching sufficiency and ablation necessity within a
single model, rather than as two separately observed empirical phenomena
(Corollary~\ref{cor:sufficiency}); and (iii) an exact, second-order-bounded
formula for the interaction that the additive picture omits whenever an
ablated head's own layer's MLP is not itself ablated away entirely
(Theorem~\ref{thm:interaction}), which identifies the single architectural
mechanism through which the first two results cease to be exact, together
with a closed-form bound on the curvature constant controlling its
remainder, established in a companion analysis, so that the second-order
claim is checkable on a given trained network rather than conditional on an
unverified hypothesis.

\section{Problem Formulation}
\label{sec:problem}

We study conditional computations implemented by residual neural networks.
Our objective is not to analyze a particular transformer architecture but to
characterize, in an abstract setting, the effect of removing a low-dimensional
weight-space subspace supporting a learned conditional computation.

Let a residual block act as $x_{l+1}=x_l+F(x_l;W)$, where $W\in\mathbb R^m$
collects the trainable parameters of the components writing into the stream.
By a \emph{low-rank weight-space ablation} we mean an edit that projects a
low-dimensional subspace
\[
\mathcal S \;=\; \operatorname{span}\{u_1,\ldots,u_p\},
\qquad p\ll m,
\]
out of those parameters, with $P_{\mathcal S}$ the orthogonal projector onto
$\mathcal S$: for a weight matrix this reads $W\mapsto(I-P_{\mathcal S})W$ or
$W\mapsto W(I-P_{\mathcal S})$ according to which side of the map the removed
subspace acts on. The concrete instances used below are the head and MLP
edits written out in Remark~\ref{rem:bridge} and
Section~\ref{sec:interaction}, in which $\mathcal S$ is spanned by directions
estimated from data. This is the operator whose effect the paper analyzes,
and it is not the operator activation patching applies: patching alters the
activations realized on one forward pass, leaving $W$ intact, whereas the
edit above changes the map generating every forward pass.

\paragraph{What we analyze.} Working directly with $W'$
is unwieldy and specific to the architecture. We therefore analyze the induced
change in the \emph{residual stream}: Section~\ref{sec:model} posits that the
block's output decomposes additively over the components that write into it,
and Sections~\ref{sec:collapse}--\ref{sec:dissociation} study the deletion of
a subset of those write-ins. The two views are connected concretely, not
in general but for the specific edits we use, in
Remark~\ref{rem:bridge}: ablating an attention head by projecting a direction
out of its output projection subtracts exactly one term of the form
$v\,\alpha(x)$ from the residual stream, which is precisely the deletion
operation Section~\ref{sec:collapse} formalizes. Everything after the present
section is stated in the output space; the weight-space operator above is
what realizes it.

Two features of a \emph{conditional} computation matter for what follows.
It is carried additively, so removing it is a subtraction rather than a
retraining; and it is what makes the block's output differ between two inputs
that the network must nevertheless answer identically. We do not assume it is
gated off outside some region of input space: the coefficients
$\alpha_i(x)$ of Section~\ref{sec:model} are unrestricted scalars, and in the
construction of Corollary~\ref{cor:sufficiency} every carrier is active on
both members of the pair. The question the paper answers is when deleting
such a computation forces the two inputs to the same answer, and how that
differs from what patching the same component would have suggested.

\section{Abstract Conditional Model}
\label{sec:model}

This section introduces the abstract mathematical model underlying the
subsequent theoretical analysis. The objective is to isolate the essential
geometric structure responsible for conditional collapse independently of any
specific neural architecture.

\subsection{Residual Decomposition}

Assume that the residual mapping admits the decomposition
\[
F(x) \;=\; F_0(x) + \sum_{i=1}^{k}\alpha_i(x)\,v_i,
\]
where $F_0$ denotes the unconditional computation, $v_i\in\mathbb R^d$ are
fixed feature directions, and $\alpha_i(x)$ are scalar selector functions. The
vectors $\mathcal V=\{v_1,\ldots,v_k\}$ represent the directions supporting the
conditional computation. We call the pair $(v_i,\alpha_i)$ the $i$-th
\emph{carrier}, a term we also use for its concrete realizations (an
attention head or an MLP writing into the stream) in
Sections~\ref{sec:interaction} and~\ref{sec:experiments}.

\subsection{Low-Rank Support}

We assume $\dim\operatorname{span}(\mathcal V)=r$ with $r\ll d$, so the
conditional computation occupies only a low-dimensional subspace of the
representation space. We denote this support by
$\mathcal C=\operatorname{span}(\mathcal V)$.

The two integers $k$ and $r$ need not coincide: the carriers are $k$ given
directions, not a basis, and $r=\dim\mathcal C\le k$ with strict inequality
whenever they are linearly dependent. Every subset-indexed statement below
(Sections~\ref{sec:collapse} and~\ref{sec:dissociation}) is therefore stated
relative to a \emph{fixed, arbitrary indexing} $i=1,\ldots,k$ of the given
carriers, which is the indexing an experimenter fixes when choosing which
components to intervene on. We neither assume the $v_i$ independent nor claim
that the results are invariant under a change of spanning set for
$\mathcal C$.

\subsection{Deleting a Subset of Carriers}
\label{sec:deletion}

The intervention analyzed in the rest of the paper deletes a sub-collection of
carriers. For an index subset $S\subseteq\{1,\dots,k\}$, write
\begin{equation}
\label{eq:ablation}
F_S(x) \;:=\; F_0(x) + \sum_{i\notin S} \alpha_i(x)\,v_i ,
\end{equation}
so that the carriers in $S$ contribute nothing while $F_0$ and every other
carrier are left exactly as they were. This is the object
Section~\ref{sec:collapse} analyzes.

Deleting terms is not the same as projecting the output onto
$\mathcal C_S^{\perp}$, where $\mathcal C_S:=\operatorname{span}\{v_i:i\in S\}$:
the projection would also strip from the \emph{surviving} carriers whatever
component they have inside $\mathcal C_S$, and the two operations agree for
every choice of coefficients only when the surviving directions are
orthogonal to $\mathcal C_S$. Taking
$S=\{1,\dots,k\}$ makes the point in the other direction: applying
$I-P_{\mathcal C}$ to the whole sum annihilates it identically, since the sum
lies in $\mathcal C$ by construction, so that extreme case returns $F_0$ and
says nothing about proper subsets. Term deletion, not subspace projection, is
what a weight-space edit of a component performs.

\begin{remark}[From a weight edit to a deleted term]
\label{rem:bridge}
The deletion \eqref{eq:ablation} is exactly what a low-rank edit of one
component's own weights does to the residual stream. In the notation of
Section~\ref{sec:interaction}, an attention head with pre-projection
activation $a(x)$ and output projection $W$ writes $W a(x)$ into the stream;
projecting a unit direction $u$ out of that projection,
$W\mapsto W(I-uu^{\top})$, changes the write-in by exactly
\[
W a(x) \;-\; W(I-uu^{\top})a(x) \;=\; (Wu)\,\langle u,a(x)\rangle,
\]
that is, it deletes one term $v\,\alpha(x)$ with $v:=Wu$ and
$\alpha(x):=\langle u,a(x)\rangle$; a projector of rank $\ell$ deletes $\ell$
such terms, one per direction of its range. This is an identity, not an
approximation. What is \emph{not} an identity, and is the subject of
Section~\ref{sec:interaction}, is the
further assumption that deleting one component's term leaves every other
component's term unchanged.
\end{remark}

\subsection{Selector Representation}

Writing the selector coefficients as $\alpha(x)=(\alpha_1(x),\ldots,\alpha_k(x))
\in\mathbb R^k$ and $V=[v_1,\ldots,v_k]$, the residual computation is
compactly $F(x)=F_0(x)+V\alpha(x)$. This matrix form makes explicit the
separation between the geometry of the supporting directions ($V$) and the
selector responsible for activating them ($\alpha$), the central
abstraction used throughout the remainder of the paper.

\subsection{Standing Assumptions}

The following assumptions frame all subsequent theoretical results.

\begin{assumption}[Residual Additivity]
\label{ass:additivity}
The conditional computation enters the network through an additive residual
connection, so that $F(x)=F_0(x)+\sum_{i=1}^k\alpha_i(x)v_i$ with the $v_i$
independent of $x$.
\end{assumption}

\begin{assumption}[Low-Rank Support]
\label{ass:lowrank}
The conditional component has finite-dimensional support: a
subspace $\mathcal C$ with $\dim(\mathcal C)\ll d$.
\end{assumption}

\begin{assumption}[Stable Base Computation]
\label{ass:stable}
The unconditional component $F_0$ remains invariant under the considered
deletion: $F_0$ in \eqref{eq:ablation} is the same function of $x$ before and
after the intervention.
\end{assumption}

Assumptions~\ref{ass:additivity} and~\ref{ass:stable} are load-bearing:
together they are what licenses writing the ablated computation as
\eqref{eq:ablation}, and they are invoked at that point in every proof of
Section~\ref{sec:collapse}. Assumption~\ref{ass:lowrank} is not. No theorem
below uses $\dim(\mathcal C)\ll d$ quantitatively; the assumption records
\emph{why} one expects a conditional to be carried by a few directions rather
than by an arbitrary linear functional of the stream: it is a modeling
commitment that makes the objects of Section~\ref{sec:deletion} the ones an
experimenter can actually enumerate and intervene on, and we state it as
such rather than pretend the results depend on it. Together the three
assumptions avoid architecture-specific details and define the framework
within which the following theorems are established.

\section{Conditional Collapse}
\label{sec:collapse}

The model of Section~\ref{sec:model} describes how a conditional computation is
embedded into a residual stream. We now ask how the network's discrete decision
responds to a low-rank ablation of that computation. Throughout this section we
restrict attention to \emph{matched pairs}: two inputs intended to receive the
same decision through opposite branches of the conditional. All results below
are conditional on a single additional assumption relating the residual stream
to the network's output.

\begin{assumption}[Linear readout]
\label{ass:readout}
There exist a linear functional $\psi\in(\mathbb R^d)^*$ and a bias
$b\in\mathbb R$ such that the network's binary decision on $F(x)$ is determined
by the sign of
\[
s(x) \;:=\; \psi\bigl(F(x)\bigr) + b,
\]
branch $A$ being selected when $s(x)>0$ and branch $B$ when $s(x)<0$.
\end{assumption}

The boundary case $s(x)=0$ is left undefined, and the same convention applies
to the ablated selector $s_S$ and to the mean margin $\bar s$ introduced
below: we assume throughout that none of these vanishes on the pairs under
consideration. We do not claim this is generic in any sense we have verified;
it is a convention that keeps every statement below a statement about strict
inequalities, and a pair violating it is simply outside the scope of the
results.

\begin{definition}[Matched pair and margins]
\label{def:pair}
A \emph{matched pair} is a pair of inputs $(x_A,x_B)$ for which the network is
correct exactly when
\[
g_A \;:=\; s(x_A) \;>\; 0 \;>\; s(x_B) \;=:\; g_B .
\]
We call $g_A,g_B$ the \emph{margins} of the pair and
$\Phi:=g_A-g_B>0$ its \emph{total contrast}.
\end{definition}

Note that $g_A$, $g_B$ and $\Phi$, and likewise $\bar q_S$, $\Phi_S$,
$\Psi_S$ and $\bar s$ below, are attached to the fixed pair $(x_A,x_B)$ and
carry no free input argument; only quantities genuinely evaluated at a
variable input, such as $\alpha_i(x)$, $s(x)$ and $s_S(x)$, are written with
one.

By Assumptions~\ref{ass:additivity} and~\ref{ass:stable}, ablating a subset
$S\subseteq\{1,\dots,k\}$ of carriers replaces $F$ by $F_S$ of
\eqref{eq:ablation}: the deleted carriers contribute nothing and everything
else, $F_0$ included, is unchanged. Write $\beta_i:=\psi(v_i)$ for the fixed
scalar through which carrier $i$ acts on the readout. The \emph{ablated
selector} is
\[
s_S(x) \;=\; \psi\bigl(F_S(x)\bigr)+b \;=\; s(x)-q_S(x),
\qquad
q_S(x) \;:=\; \sum_{i\in S}\beta_i\,\alpha_i(x),
\]
where $q_S(x)$ is the \emph{removed mass}. On a matched pair, split the removed
mass into symmetric and antisymmetric parts,
\[
\bar q_S \;:=\; \tfrac12\bigl(q_S(x_A)+q_S(x_B)\bigr),
\qquad
\Phi_S \;:=\; q_S(x_A)-q_S(x_B),
\]
so that $q_S(x_A)=\bar q_S+\Phi_S/2$ and $q_S(x_B)=\bar q_S-\Phi_S/2$.

\begin{lemma}[Exact flip criterion]
\label{lem:flip}
For every matched pair and every ablated subset $S$: branch $A$ flips to $B$
if and only if $\bar q_S+\Phi_S/2 > g_A$, and branch $B$ flips to $A$ if
and only if $\bar q_S-\Phi_S/2 < g_B$.
\end{lemma}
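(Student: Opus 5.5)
The plan is to reduce both statements to the identity $s_S(x)=s(x)-q_S(x)$ established just above the lemma. That identity is the only place the standing hypotheses enter: by Assumptions~\ref{ass:additivity} and~\ref{ass:stable} the ablated residual is $F_S$ of \eqref{eq:ablation}, and linearity of $\psi$ in Assumption~\ref{ass:readout} splits off the removed mass. After that, the argument is pure bookkeeping on the two members of the matched pair.

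First I will make precise what ``flips'' means. Branch $A$ flips to $B$ when $x_A$, which satisfies $s(x_A)=g_A>0$ by Definition~\ref{def:pair}, has $s_S(x_A)<0$ after ablation. Symmetrically, branch $B$ flips to $A$ when $s_S(x_B)>0$ even though $g_B<0$. By the boundary convention stated after Assumption~\ref{ass:readout}, $s_S(x_A)$ and $s_S(x_B)$ are nonzero. So ``not flipped'' is exactly the opposite strict inequality, and no tie case needs separate treatment.

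Next I evaluate the identity at each input and substitute the symmetric/antisymmetric split $q_S(x_A)=\bar q_S+\Phi_S/2$ and $q_S(x_B)=\bar q_S-\Phi_S/2$. This gives
\[
s_S(x_A)=g_A-\bar q_S-\Phi_S/2,\qquad s_S(x_B)=g_B-\bar q_S+\Phi_S/2 .
\]
Then $s_S(x_A)<0$ rearranges to $\bar q_S+\Phi_S/2>g_A$, and $s_S(x_B)>0$ rearranges to $\bar q_S-\Phi_S/2<g_B$. Each rearrangement only adds the same quantity to both sides, so it is reversible, and both directions of each ``if and only if'' follow at once.

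There is no real obstacle. The one point needing care is the strict-versus-weak inequality at the boundary. A tie such as $\bar q_S+\Phi_S/2=g_A$ would make $s_S(x_A)=0$, which the standing convention excludes. I will state this explicitly, so that the criteria are genuine equivalences rather than equivalences holding only up to a measure-zero set.
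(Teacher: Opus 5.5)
Your proposal is correct and is essentially the paper's own proof: substitute $q_S(x_A)=\bar q_S+\Phi_S/2$ and $q_S(x_B)=\bar q_S-\Phi_S/2$ into $s_S=s-q_S$, then read off when $s_S(x_A)<0$ and when $s_S(x_B)>0$. One small remark: once ``flips'' is defined by these strict inequalities, the two equivalences hold verbatim even without the nonvanishing convention, which is needed only to identify ``not flipped'' with the opposite strict inequality.
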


\begin{proof}
$s_S(x_A)=g_A-q_S(x_A)=g_A-\bar q_S-\Phi_S/2$; branch $A$ flips exactly
when this is negative. The claim for $B$ follows symmetrically from
$s_S(x_B)=g_B-\bar q_S+\Phi_S/2$.
\end{proof}

Because each carrier is removed exactly rather than through a possibly
imperfect projection, ablating $S$ never leaves a partial residue of the
carriers it contains. What determines collapse is therefore not how
completely $S$ removes its own carriers (always exact here) but
whether $S$ accounts for the \emph{entire} contrast between $x_A$ and $x_B$.
Define the \emph{uncaptured contrast}
\[
\Psi_S \;:=\; \psi\bigl(F_0(x_A)-F_0(x_B)\bigr)
+ \sum_{i\notin S}\beta_i\bigl(\alpha_i(x_A)-\alpha_i(x_B)\bigr),
\]
the portion of $\Phi$ carried by everything $S$ does not touch: the
unconditional part $F_0$ together with every carrier left in place. A direct
computation gives $\Phi_S=\Phi-\Psi_S$: what $S$ removes is exactly
what remains once the uncaptured contrast is subtracted from the total.

\begin{theorem}[Conditional collapse]
\label{thm:collapse}
For a given matched pair $(x_A,x_B)$ and ablated subset $S$, define
\[
\bar s \;:=\; \tfrac12\bigl(g_A+g_B\bigr).
\]
Then $s_S(x_A)=s_S(x_B)=\bar s$ \textup{if and only if}
\begin{enumerate}
\item[(i)] $\bar q_S=0$ \textup{(the removal is symmetric on the pair)}, and
\item[(ii)] $\Psi_S=0$ \textup{(no format contrast survives outside $S$)}.
\end{enumerate}
When (i)--(ii) hold, both inputs of the pair are mapped to the same branch
after ablation: $A$ if $\bar s>0$, $B$ if $\bar s<0$. We call this an
\emph{unconditional collapse} with \emph{polarity}
$\operatorname{sign}\bar s$.
\end{theorem}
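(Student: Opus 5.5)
The proof is a direct linear computation built on the explicit formulas already used in the proof of Lemma~\ref{lem:flip}. By Assumptions~\ref{ass:additivity}, \ref{ass:stable} and~\ref{ass:readout}, the ablated selector on the two members of the pair is
\[
s_S(x_A)=g_A-\bar q_S-\tfrac12\Phi_S,\qquad s_S(x_B)=g_B-\bar q_S+\tfrac12\Phi_S .
\]
The plan is to convert the pair of equations $s_S(x_A)=s_S(x_B)=\bar s$ into an equivalent pair of conditions on the sum and on the difference of these two quantities.

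First, take the difference. Subtracting the two displayed expressions gives $s_S(x_A)-s_S(x_B)=\Phi-\Phi_S$. Using the identity $\Phi_S=\Phi-\Psi_S$, stated just before the theorem, this becomes $s_S(x_A)-s_S(x_B)=\Psi_S$. I will verify that identity in one line. It follows by expanding $\Phi=\psi(F(x_A)-F(x_B))$ with the decomposition of $F$, and then splitting the carrier sum into the terms with $i\in S$ and those with $i\notin S$.

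Second, take the mean. Averaging the two expressions gives $\tfrac12\bigl(s_S(x_A)+s_S(x_B)\bigr)=\bar s-\bar q_S$.

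Since the map $(a,b)\mapsto\bigl(a-b,\tfrac12(a+b)\bigr)$ is a bijection of $\mathbb R^2$, the pair $(s_S(x_A),s_S(x_B))$ equals $(\bar s,\bar s)$ exactly when its difference is $0$ and its mean is $\bar s$. By the two computations above, that holds exactly when $\Psi_S=0$ and $\bar q_S=0$. This argument proves both directions of the equivalence at once, so no separate converse is needed.

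For the polarity clause, note that under (i)--(ii) both ablated margins equal $\bar s$. Assumption~\ref{ass:readout} then assigns both inputs branch $A$ if $\bar s>0$ and branch $B$ if $\bar s<0$. The case $\bar s=0$ is excluded by the standing convention on boundary values.

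There is no real obstacle here. The only step that needs care is confirming $\Phi_S=\Phi-\Psi_S$ with the bias $b$ handled correctly: $b$ cancels in every difference of margins. It does not cancel in the mean, and that is exactly why $\bar s$ retains $b$.
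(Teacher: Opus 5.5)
Your proposal is correct and follows the paper's proof. The paper's ($\Rightarrow$) direction is exactly your difference-and-mean computation, giving $s_S(x_A)-s_S(x_B)=\Psi_S$ and $\tfrac12(s_S(x_A)+s_S(x_B))=\bar s-\bar q_S$. Its ($\Leftarrow$) direction instead substitutes (i)--(ii) directly into $q_S(x_A)=\bar q_S+\Phi_S/2$ and $q_S(x_B)=\bar q_S-\Phi_S/2$; your observation that the sum/difference map is invertible handles both directions at once, which is the same step the paper later uses to get the exact identities of Proposition~\ref{prop:robust}.
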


\begin{proof}
($\Leftarrow$) Condition (ii) gives $\Phi_S=\Phi$. Substituting into
$q_S(x_A)=\bar q_S+\Phi_S/2$ and $q_S(x_B)=\bar q_S-\Phi_S/2$ together with
condition (i) gives $q_S(x_A)=\Phi/2$ and $q_S(x_B)=-\Phi/2$. Hence
$s_S(x_A)=g_A-\Phi/2=\tfrac12(g_A+g_B)=\bar s$, and
symmetrically $s_S(x_B)=g_B+\Phi/2=\bar s$.

($\Rightarrow$) From $s_S(x_A)=g_A-q_S(x_A)$ and
$s_S(x_B)=g_B-q_S(x_B)$, subtracting gives
$s_S(x_A)-s_S(x_B)=\Phi-\bigl(q_S(x_A)-q_S(x_B)\bigr)=\Phi-\Phi_S=
\Psi_S$; if $s_S(x_A)=s_S(x_B)$ this forces $\Psi_S=0$, which is (ii).
Averaging the same two identities gives
$\tfrac12\bigl(s_S(x_A)+s_S(x_B)\bigr)=\bar s-\bar q_S$; if this common
value equals $\bar s$, then $\bar q_S=0$, which is (i).
\end{proof}

\begin{corollary}[Purity]
\label{cor:purity}
Under the hypotheses of Theorem~\ref{thm:collapse}, if $\bar s\ne0$ then
exactly one input of the pair is misclassified after ablation, and its error
is deterministic: the collapsed decision equals the branch that was already
correct for the \emph{other} input of the pair. No third outcome is possible.
\end{corollary}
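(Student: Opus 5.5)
The plan is to read everything off Theorem~\ref{thm:collapse}. Under conditions (i)--(ii) that theorem gives the exact equalities $s_S(x_A)=s_S(x_B)=\bar s$. With the standing convention that $\bar s\neq0$, both ablated readouts have the same strict sign, $\operatorname{sign}\bar s$. By Assumption~\ref{ass:readout}, both inputs are therefore assigned the same branch: $A$ if $\bar s>0$ and $B$ if $\bar s<0$. No case analysis beyond this sign split is needed.

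Next we compare with the correct labels from Definition~\ref{def:pair}. Input $x_A$ is correct exactly when its readout is positive, and $x_B$ exactly when its readout is negative. We treat the two cases in turn.

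\textbf{Case $\bar s>0$.} The ablated readout on $x_A$ is positive, so $x_A$ stays correct. The readout on $x_B$ is also positive, whereas its correct sign was $g_B<0$, so $x_B$ is misclassified. The collapsed decision is $A$, which is the branch that was correct for the other input $x_A$.

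\textbf{Case $\bar s<0$.} The situation is symmetric. Now $x_A$ is misclassified, $x_B$ is correct, and the collapsed decision is $B$, the branch correct for $x_B$.

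In either case exactly one input is wrong, since one readout has the correct sign and the other does not.

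Determinism follows because the misclassified input's ablated readout equals the fixed number $\bar s$. This number depends only on the pair's original margins $g_A,g_B$. It does not depend on $S$ or on any residual randomness, so the error is a fixed outcome rather than a noisy one.

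The claim that ``no third outcome is possible'' is the statement that three alternatives are excluded: both inputs correct, both inputs wrong, and an undefined decision. The first two are excluded because $s_S(x_A)=s_S(x_B)$ forces the same decision on both inputs, while the correct labels of the two inputs differ. The third is excluded because $\bar s\neq0$.

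The only delicate point is bookkeeping: making sure the phrase ``the branch already correct for the other input'' matches the sign cases. For instance, when $\bar s>0$ the wrong input is $x_B$, and the decision it receives is $A$, the branch correct for $x_A$. Beyond that I expect no real obstacle, since the corollary is an immediate sign reading of Theorem~\ref{thm:collapse}.
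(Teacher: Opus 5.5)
Your proposal is correct and matches the paper's reasoning. The paper gives no separate proof of this corollary; the argument it does use, inside the proof of Proposition~\ref{prop:robust}, is exactly your sign reading: Theorem~\ref{thm:collapse} puts both ablated selectors at $\bar s$, so both inputs go to branch $\operatorname{sign}\bar s$, and because the pair is matched that branch is correct for exactly one of them. Your case split and your exclusion of the three alternative outcomes simply spell out that one line.
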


\begin{corollary}[Idealized invariance]
\label{cor:invariance}
If two subsets $S,S'$ each satisfy the hypotheses of Theorem~\ref{thm:collapse}
for the same matched pair, then $s_S(x_A)=s_{S'}(x_A)$ and
$s_S(x_B)=s_{S'}(x_B)$: they induce the same collapse, with the same
polarity.
\end{corollary}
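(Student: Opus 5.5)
The statement to prove is Corollary~\ref{cor:invariance}. I will read it directly off the forward direction of Theorem~\ref{thm:collapse}. The key observation is that the collapsed value $\bar s=\tfrac12(g_A+g_B)$ depends only on the margins of the pair, computed from the \emph{unablated} selector $s$. It does not depend on $S$.

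First, suppose $S$ satisfies (i) $\bar q_S=0$ and (ii) $\Psi_S=0$. The ($\Leftarrow$) direction of Theorem~\ref{thm:collapse} then gives
\[
s_S(x_A)=s_S(x_B)=\bar s .
\]
Second, suppose $S'$ satisfies the same two conditions for the same matched pair $(x_A,x_B)$. The same direction of the theorem gives $s_{S'}(x_A)=s_{S'}(x_B)=\bar s$.

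Both the margins $g_A,g_B$ and $\bar s$ are attached to the fixed pair and carry no dependence on the subset. Chaining the two displays therefore yields $s_S(x_A)=\bar s=s_{S'}(x_A)$, and likewise $s_S(x_B)=s_{S'}(x_B)$.

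For the polarity clause, the theorem states that the collapsed branch is $\operatorname{sign}\bar s$. This sign is the same quantity for $S$ and for $S'$, so both subsets induce the same polarity. Under the paper's standing convention, $\bar s\neq0$, so this sign is well defined.

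I expect no real obstacle. The only point to check is that nothing in $\bar s$ secretly depends on $S$. It does not: $g_A=s(x_A)$ and $g_B=s(x_B)$ are defined before any deletion, and Assumption~\ref{ass:stable} guarantees that $F_0$, and hence the unablated baseline, is the same function for both interventions. It is also worth remarking that $S$ and $S'$ may differ arbitrarily, and may even be disjoint. The corollary asserts equality of the readout values only, not equality of the edited stream vectors $F_S(x)$ and $F_{S'}(x)$, so no condition on the directions $v_i$ is needed.
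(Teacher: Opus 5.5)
Your proof is correct and is essentially the paper's own argument: the paper gives no separate proof because, by the sufficiency direction of Theorem~\ref{thm:collapse}, any qualifying subset sends both inputs to $\bar s=\tfrac12(g_A+g_B)$, which the remark after the corollary notes is pinned down by the pair's margins alone, with polarity $\operatorname{sign}\bar s$. One cosmetic slip: the direction you actually use is ($\Leftarrow$), not the ``forward'' direction your first sentence names, and $g_A,g_B$ are independent of $S$ simply because they are defined on the unablated network, not because of Assumption~\ref{ass:stable}.
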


\begin{remark}
Corollary~\ref{cor:invariance} is a statement about the idealized model alone:
under Assumption~\ref{ass:readout}, the outcome of any qualifying ablation is
pinned down by the pair's own margins $g_A,g_B$, not by which qualifying
subset realizes it. An empirical instance in which two ablations that each
look complete produce \emph{different} polarities on the same trained network
is therefore evidence that at least one of them fails hypothesis (i) or (ii).
Two mechanisms can produce that failure and they are worth keeping apart: the
true computation may depart from Assumption~\ref{ass:readout}'s additive,
linearly-read picture, which Section~\ref{sec:interaction} identifies and
bounds for one structurally motivated nonlinearity; or the hypotheses may fail
\emph{inside} the linear model, with $\bar q_S$ or $\Psi_S$ simply not small.
Corollary~\ref{cor:robustinvariance} below makes the second alternative
quantitative, and Section~\ref{sec:experiments} finds it is the one that
actually occurs on the instance where we observe a reversal.
\end{remark}

\subsection{A robust form}
\label{sec:robust}

Conditions (i) and (ii) of Theorem~\ref{thm:collapse} are exact equalities
between real numbers computed from a trained network, so they hold on a set of
measure zero and no experiment ever satisfies them. Stated only in that form
the theorem is a statement about an event that never occurs. It is worth
recording that nothing in it depends on the equalities being exact: the same
computation gives an error term, and the error term is an identity rather than
an estimate.

\begin{proposition}[Exact error decomposition and robust collapse]
\label{prop:robust}
For every matched pair and every subset $S$, with no hypothesis whatsoever,
\begin{equation}
\label{eq:robust}
s_S(x_A)-\bar s \;=\; -\bar q_S+\tfrac12\Psi_S,
\qquad
s_S(x_B)-\bar s \;=\; -\bar q_S-\tfrac12\Psi_S .
\end{equation}
Consequently, if $|\bar q_S|\le\varepsilon_1$ and $|\Psi_S|\le\varepsilon_2$,
then
\[
\bigl|s_S(x_A)-\bar s\bigr|\;\le\;\varepsilon_1+\tfrac12\varepsilon_2,
\qquad
\bigl|s_S(x_B)-\bar s\bigr|\;\le\;\varepsilon_1+\tfrac12\varepsilon_2,
\qquad
\bigl|s_S(x_A)-s_S(x_B)\bigr|\;\le\;\varepsilon_2 ,
\]
and if in addition $|\bar s|>\varepsilon_1+\tfrac12\varepsilon_2$, then both
inputs of the pair are mapped to the branch $\operatorname{sign}\bar s$,
exactly one of them is misclassified, and its error is deterministic. That is,
the conclusions of Theorem~\ref{thm:collapse} and Corollary~\ref{cor:purity}
survive verbatim with the exact collapse $s_S(x_A)=s_S(x_B)=\bar s$ weakened to
agreement within $\varepsilon_2$. Theorem~\ref{thm:collapse} is the case
$\varepsilon_1=\varepsilon_2=0$.
\end{proposition}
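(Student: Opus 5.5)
The plan is to derive the identity \eqref{eq:robust} directly from the two relations already used in the proof of Theorem~\ref{thm:collapse}, and then read off every bound from it. The starting point is $s_S(x_A)=g_A-q_S(x_A)$ and $s_S(x_B)=g_B-q_S(x_B)$, which hold with no hypothesis beyond Assumptions~\ref{ass:additivity}, \ref{ass:stable} and~\ref{ass:readout}. I then substitute the symmetric/antisymmetric split $q_S(x_A)=\bar q_S+\Phi_S/2$ and $q_S(x_B)=\bar q_S-\Phi_S/2$, together with $\Phi_S=\Phi-\Psi_S$. This gives
\[
s_S(x_A)=g_A-\bar q_S-\tfrac12\Phi+\tfrac12\Psi_S,
\]
and since $g_A-\tfrac12\Phi=\tfrac12(g_A+g_B)=\bar s$, the first identity follows. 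The second follows in the same way from $g_B+\tfrac12\Phi=\bar s$. The only fact to double-check is $\Phi_S=\Phi-\Psi_S$, which the text asserts as a direct computation. I would verify it by writing $\Phi=\psi(F(x_A)-F(x_B))$, since the bias cancels, and splitting the carrier sum into indices in $S$ and indices outside $S$.

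The inequalities then follow from the triangle inequality applied to \eqref{eq:robust}. Each deviation is bounded by $|\bar q_S|+\tfrac12|\Psi_S|\le\varepsilon_1+\tfrac12\varepsilon_2$. Subtracting the two identities gives $s_S(x_A)-s_S(x_B)=\Psi_S$, so that difference is bounded by $\varepsilon_2$.

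For the classification conclusion, suppose $|\bar s|>\varepsilon_1+\tfrac12\varepsilon_2$. Then each $s_S(x_\cdot)$ lies within distance strictly less than $|\bar s|$ of $\bar s$, so it has the same strict sign as $\bar s$. In particular it is nonzero, which keeps the statement within the paper's strict-inequality convention. Both inputs are therefore sent to branch $\operatorname{sign}\bar s$. Because $g_A>0>g_B$, the correct branch for $x_A$ is $A$ and for $x_B$ is $B$, so exactly one of the two inputs is misclassified: $x_B$ if $\bar s>0$, and $x_A$ if $\bar s<0$. The error is deterministic because which input is wrong is fixed by $\operatorname{sign}\bar s$, a function of the pair's margins alone. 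This reproduces the argument of Corollary~\ref{cor:purity}. Setting $\varepsilon_1=\varepsilon_2=0$ recovers Theorem~\ref{thm:collapse}, because then both deviations vanish.

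No step is a genuine obstacle, since everything is linear bookkeeping. The point needing the most care is the strictness in the sign argument. I must use the strict hypothesis $|\bar s|>\varepsilon_1+\tfrac12\varepsilon_2$ so that neither ablated selector can land on zero.
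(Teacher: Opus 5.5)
Your proposal is correct and follows the paper's proof essentially step for step. The paper obtains \eqref{eq:robust} by the same direct substitution of $q_S(x_A)=\bar q_S+\tfrac12\Phi_S$ and $\Psi_S=\Phi-\Phi_S$ (noting that equivalently one can solve the difference and average identities from the proof of Theorem~\ref{thm:collapse}), then uses the triangle inequality and the same strict-inequality sign argument to get polarity, purity and determinacy.
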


\begin{proof}
The proof of Theorem~\ref{thm:collapse} already establishes
$s_S(x_A)-s_S(x_B)=\Psi_S$ and
$\tfrac12\bigl(s_S(x_A)+s_S(x_B)\bigr)=\bar s-\bar q_S$; solving these two
linear equations for the individual terms gives \eqref{eq:robust}. (Directly:
$s_S(x_A)-\bar s=g_A-q_S(x_A)-\bigl(g_A-\tfrac12\Phi\bigr)
=\tfrac12(\Phi-\Phi_S)-\bar q_S$, using
$q_S(x_A)=\bar q_S+\tfrac12\Phi_S$ and $\Psi_S=\Phi-\Phi_S$; the computation
for $x_B$ is symmetric.) The three bounds are the triangle inequality applied
to \eqref{eq:robust} and to their difference. For the last claim, write
$z:=s_S(x_A)$; from $|z-\bar s|\le\varepsilon_1+\tfrac12\varepsilon_2<|\bar s|$
we get $z>\bar s-|\bar s|=0$ when $\bar s>0$ and $z<\bar s+|\bar s|=0$ when
$\bar s<0$, so $\operatorname{sign}z=\operatorname{sign}\bar s$, and likewise
for $s_S(x_B)$. Since the pair is matched, the branch
$\operatorname{sign}\bar s$ is the correct answer for exactly one of the two
inputs, which gives purity and determinacy as in
Corollary~\ref{cor:purity}.
\end{proof}

\begin{corollary}[Robust invariance]
\label{cor:robustinvariance}
If $S$ and $S'$ both satisfy $|\bar q|\le\varepsilon_1$ and
$|\Psi|\le\varepsilon_2$ on the same matched pair, then
$\bigl|s_S(x)-s_{S'}(x)\bigr|\le2\varepsilon_1+\varepsilon_2$ for
$x\in\{x_A,x_B\}$, and if $|\bar s|>\varepsilon_1+\tfrac12\varepsilon_2$ they
induce the \emph{same} polarity. Contrapositively, if two subsets collapse the
same pair onto \emph{opposite} branches, they cannot both satisfy the
hypothesis at that tolerance: at least one has
$\varepsilon_1+\tfrac12\varepsilon_2\ge|\bar s|$.
\end{corollary}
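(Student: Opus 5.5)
The plan is to read everything off the exact identity \eqref{eq:robust} of Proposition~\ref{prop:robust}, applied separately to $S$ and to $S'$. The key observation is that the reference value $\bar s=\tfrac12(g_A+g_B)$ depends only on the pair's margins, not on the ablated subset. So both ablations are measured against the same centre, and the comparison between them reduces to a triangle inequality through that common point.

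First, for $x\in\{x_A,x_B\}$ I write
\[
s_S(x)-s_{S'}(x)=\bigl(s_S(x)-\bar s\bigr)-\bigl(s_{S'}(x)-\bar s\bigr).
\]
By Proposition~\ref{prop:robust}, each bracket is bounded in absolute value by $\varepsilon_1+\tfrac12\varepsilon_2$, since $S$ and $S'$ each satisfy the tolerance hypothesis. The triangle inequality then gives the bound $2\varepsilon_1+\varepsilon_2$.

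One could sharpen this using the signed form of \eqref{eq:robust}: the difference equals $-(\bar q_S-\bar q_{S'})\pm\tfrac12(\Psi_S-\Psi_{S'})$. This yields the same bound, so I would not pursue it.

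Second, for the polarity claim, I invoke the last part of Proposition~\ref{prop:robust} once for $S$ and once for $S'$. Under $|\bar s|>\varepsilon_1+\tfrac12\varepsilon_2$, each subset maps both inputs to the branch $\operatorname{sign}\bar s$. Since that sign is one fixed number attached to the pair, the two polarities coincide. It is worth noting that this does not follow from the first bound alone, because $2\varepsilon_1+\varepsilon_2$ may exceed $|\bar s|$. The argument has to go through the common centre $\bar s$, not through the distance between $s_S$ and $s_{S'}$.

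Third, the contrapositive is pure logic. Suppose $S$ and $S'$ collapse the pair onto opposite branches, and suppose both satisfied the hypothesis at a tolerance with $\varepsilon_1+\tfrac12\varepsilon_2<|\bar s|$. Then the previous step would force equal polarities, which is a contradiction. Hence $\varepsilon_1+\tfrac12\varepsilon_2\ge|\bar s|$ at that tolerance. Read per subset, at least one of the two must have its own $|\bar q|+\tfrac12|\Psi|\ge|\bar s|$, since otherwise both would be certified with individual tolerances below $|\bar s|$.

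I expect no step to be a genuine obstacle. The only point needing care is phrasing the contrapositive so that it refers to per-subset tolerances, in the way the statement's ``at least one'' intends.
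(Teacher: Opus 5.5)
Your proposal is correct and follows the paper's proof. The paper gets the $2\varepsilon_1+\varepsilon_2$ bound by subtracting the signed identities \eqref{eq:robust} for $S$ and $S'$, which is the route you mention and set aside. Your triangle inequality through the common centre $\bar s$ is an immaterial variant of it. The paper derives the polarity claim and the contrapositive exactly as you do, from the fact that Proposition~\ref{prop:robust} assigns both subsets the pair-dependent polarity $\operatorname{sign}\bar s$; your per-subset reading of the contrapositive is valid and is the reading the paper's seed~44 measurement uses.
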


\begin{proof}
Subtracting the first identity of \eqref{eq:robust} for $S$ and for $S'$ gives
$s_S(x_A)-s_{S'}(x_A)=(\bar q_{S'}-\bar q_S)+\tfrac12(\Psi_S-\Psi_{S'})$, whose
modulus is at most $2\varepsilon_1+\varepsilon_2$; the computation at $x_B$
differs only in the sign of the second group. The polarity claim and its
contrapositive follow from the last part of Proposition~\ref{prop:robust},
which assigns both subsets the polarity $\operatorname{sign}\bar s$, a
quantity attached to the pair, not to the subset.
\end{proof}

Corollary~\ref{cor:robustinvariance} is the quantitative form of the
observation in the remark above, and it is the one that can be checked: a
polarity reversal between two ablations of the same network is not merely
qualitative evidence that some hypothesis fails, but a lower bound,
$\varepsilon_1+\tfrac12\varepsilon_2\ge|\bar s|$, on how badly it fails for at
least one of them. Section~\ref{sec:experiments} measures
$\bar q_S$, $\Psi_S$ and $\bar s$ on the trained instances and reports how often
the hypothesis of Proposition~\ref{prop:robust} is actually available.

\section{Patching--Ablation Dissociation}
\label{sec:dissociation}

Activation patching and weight-space ablation are often reported side by side
as two ways of probing the same causal object: both are described as
``removing'' or ``restoring'' a carrier's contribution, and both are read as
evidence about whether that carrier is causally load-bearing. We show that,
even in the idealized model of Sections~\ref{sec:model}--\ref{sec:collapse},
the two interventions are governed by different quantities and can disagree
in a specific, predictable way: a carrier can be \emph{sufficient} to flip a
decision under patching while no single-carrier ablation is
\emph{necessary} to flip the same decision. This section makes that
distinction exact.

\begin{definition}[Patching a carrier]
\label{def:patch}
For a matched pair $(x_A,x_B)$ ($x_A$ the \emph{donor}, $x_B$ the
\emph{receiver}) and a single carrier $i$, \emph{patching} $i$ replaces
$\alpha_i(x_B)$ by $\alpha_i(x_A)$ inside the receiver's computation, leaving
every other carrier and $F_0$ evaluated at $x_B$. The patched selector is
\[
s_i^{\to B}(x_B) \;:=\; s(x_B) + \beta_i\delta_i,
\qquad
\delta_i \;:=\; \alpha_i(x_A)-\alpha_i(x_B).
\]
\end{definition}

\begin{theorem}[Patching--ablation dissociation]
\label{thm:dissociation}
For every carrier $i$ and matched pair $(x_A,x_B)$:
\begin{enumerate}
\item[(a)] Patching $i$ into the receiver changes the selector by exactly
$\beta_i\delta_i$, \emph{independently of $F_0$, of every other carrier,
and of how the total contrast $\Phi$ is distributed among carriers};
the receiver flips if and only if $\beta_i\delta_i > -g_B$.
\item[(b)] Ablating $i$ alone changes the selector by exactly $-\beta_i\alpha_i(x_B)$,
a quantity that depends on the receiver's own coefficient $\alpha_i(x_B)$, not
on the contrast $\delta_i$; the receiver flips if and only if
$-\beta_i\alpha_i(x_B) > -g_B$.
\end{enumerate}
In particular, patching sufficiency and ablation necessity are governed by two
generally unrelated quantities, $\beta_i\delta_i$ and $\beta_i\alpha_i(x_B)$,
and neither bounds the other without further assumptions.
\end{theorem}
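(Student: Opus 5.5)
The plan is a direct computation in the abstract conditional model, in the same style as the proof of Lemma~\ref{lem:flip}, followed by an explicit pair of constructions for the final ``neither bounds the other'' clause.

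For (a), I will write the patched residual at the receiver as $F_0(x_B)+\sum_{j\ne i}\alpha_j(x_B)v_j+\alpha_i(x_A)v_i$ and apply the linear readout of Assumption~\ref{ass:readout}. Subtracting $s(x_B)=\psi(F(x_B))+b$ leaves exactly $\psi(v_i)(\alpha_i(x_A)-\alpha_i(x_B))=\beta_i\delta_i$. This agrees with Definition~\ref{def:patch}, and it shows that every term involving $F_0$ or carriers $j\ne i$ cancels identically. That cancellation is the precise content of the independence claim: no hypothesis on how $\Phi$ splits across carriers enters. The receiver is wrong before patching ($g_B<0$) and flips exactly when $g_B+\beta_i\delta_i>0$, i.e.\ $\beta_i\delta_i>-g_B$. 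The zero boundary is excluded by the standing convention.

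For (b), I will take $S=\{i\}$ in the ablated selector $s_S(x)=s(x)-q_S(x)$. This is already licensed by Assumptions~\ref{ass:additivity} and~\ref{ass:stable} via \eqref{eq:ablation}, and it gives $s_{\{i\}}(x_B)=g_B-\beta_i\alpha_i(x_B)$. The flip condition $-\beta_i\alpha_i(x_B)>-g_B$ then follows exactly as in Lemma~\ref{lem:flip}, specialized to the receiver side.

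The only step with content beyond bookkeeping is the last clause, that neither quantity bounds the other. My plan is to exhibit, for fixed $\beta_i\ne0$, two one-parameter families:
\begin{itemize}
\item[(1)] keep $\alpha_i(x_B)=0$, so the ablation effect is $0$, while $\delta_i$ grows without bound;
\item[(2)] keep $\delta_i=0$ by setting $\alpha_i(x_A)=\alpha_i(x_B)=c$, so the patching effect is $0$, while $c$ grows without bound.
\end{itemize}
In each family I will use $F_0$ and a second carrier to restore the matched-pair margins $g_A>0>g_B$. This is always possible because $\psi(F_0(x_A))$ and $\psi(F_0(x_B))$ are free parameters of the model. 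Consequently no inequality of the form $|\beta_i\delta_i|\le C|\beta_i\alpha_i(x_B)|$ can hold, and neither can the reverse.

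I expect the main obstacle to be making sure these constructions remain genuine matched pairs with nonzero margins, not merely algebraic counterexamples. Choosing $F_0$ to absorb the required offsets should settle this.
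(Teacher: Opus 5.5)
Your treatment of (a) and (b) is the paper's own: (a) is the patched selector of Definition~\ref{def:patch} read against the sign rule, and (b) is Lemma~\ref{lem:flip} at $S=\{i\}$, where the paper cancels $\bar q_{\{i\}}-\Phi_{\{i\}}/2$ down to $\beta_i\alpha_i(x_B)$ and you read $s_{\{i\}}(x_B)=g_B-\beta_i\alpha_i(x_B)$ off directly, which is the same computation. Your two one-parameter families go further than the paper (there $F_0$ alone restores $g_A>0>g_B$, so the second carrier is unnecessary): the paper's proof asserts the closing ``neither bounds the other'' clause without argument, and its Corollary~\ref{cor:sufficiency} exhibits only the patch-exceeds-ablation direction; one wording slip to fix is that the receiver is \emph{correct} before patching ($g_B<0$ puts it on branch $B$), although your flip inequality is right.
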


\begin{proof}
Part (a) is Definition~\ref{def:patch} together with the sign condition for
branch $B$ in the sense of Assumption~\ref{ass:readout}. Part (b) is
Lemma~\ref{lem:flip} specialized to $S=\{i\}$, for which $\bar q_{\{i\}}
=\tfrac12(\beta_i\alpha_i(x_A)+\beta_i\alpha_i(x_B))$ and $\Phi_{\{i\}}
=\beta_i\delta_i$, giving $\bar q_{\{i\}}-\Phi_{\{i\}}/2=\beta_i\alpha_i(x_B)$
after cancellation.
\end{proof}

\begin{corollary}[Sufficiency without necessity]
\label{cor:sufficiency}
There exist matched pairs and families of $n\ge2$ carriers for which every
single-carrier patch flips the receiver while no single-carrier ablation
does.
\end{corollary}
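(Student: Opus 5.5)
The plan is to prove the corollary by an explicit construction that reads Theorem~\ref{thm:dissociation} as a pair of inequalities to satisfy at the receiver for every $i$ at once. Part (a) says the patch of carrier $i$ flips $x_B$ if and only if $\beta_i\delta_i>-g_B=|g_B|$. Part (b) says ablating carrier $i$ alone flips $x_B$ if and only if $-\beta_i\alpha_i(x_B)>|g_B|$. So it suffices to make each contrast $\beta_i\delta_i$ large while keeping each level $\beta_i\alpha_i(x_B)$ nonnegative. In that regime deletion only pushes $s(x_B)$ further below zero and can never flip it. Since $F_0$ enters neither quantity (Theorem~\ref{thm:dissociation}(a)), the base term can be used freely to place the margins.

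The concrete choice is as follows. Take $n\ge2$ carriers, any readout $\psi$, and directions $v_i$ with $\beta_i=\psi(v_i)=1$ for every $i$; for instance, all $v_i$ equal to one vector $v$ with $\psi(v)=1$, since nothing requires independence. Choose the coefficients
\[
\alpha_i(x_B)=t>0,\qquad \alpha_i(x_A)=t+a,\qquad a>0,
\]
so that every carrier is active on both members of the pair, as promised in Section~\ref{sec:problem}. Choose $F_0$ constant on the pair with $\psi(F_0)+b=c$. Then
\[
g_B=c+nt,\qquad g_A=c+n(t+a).
\]
Fixing $c:=-(nt+1)$ gives $g_B=-1<0$ and $g_A=na-1$. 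This is positive as soon as $a>1/n$, so $(x_A,x_B)$ is a matched pair in the sense of Definition~\ref{def:pair}.

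The verification is then immediate from the theorem. For the patch, $\beta_i\delta_i=a$, so with $a>1$ (say $a=2$) we get $a>|g_B|=1$, and every single-carrier patch flips the receiver. For the ablation, $-\beta_i\alpha_i(x_B)=-t<0<1=|g_B|$, so by (b) no single-carrier ablation flips it; equivalently, $s_{\{i\}}(x_B)=-1-t<0$. I would also check the boundary convention of Assumption~\ref{ass:readout}, which is satisfied here since none of $g_A,g_B,s_{\{i\}}(x_B)$, or the patched selector $1$, vanishes.

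There is no real obstacle. The only point needing care is the sign bookkeeping in (b): the receiver lies on the negative side, so "flip" means crossing zero upward. Removing a positive contribution moves the selector downward and is therefore harmless. Finally, I would remark that the construction places $n$ redundant carriers each of whose contrast alone exceeds the receiver's margin, so the patch recovery $\beta_i\delta_i/\Phi=a/(na)=1/n$ of each carrier understates nothing. It is ablation that reports dispensability, which is the redundancy regime described in the introduction.
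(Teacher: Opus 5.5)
Your construction is correct and follows the same route as the paper's: identical carriers with $\beta_i=1$, $F_0$ agreeing on the pair, $g_B=-1$, and a direct check of parts (a) and (b) of Theorem~\ref{thm:dissociation}. The one substantive difference is the sign of the receiver level. The paper takes $\alpha_i(x_B)=-\tfrac12$, so each carrier genuinely supports the receiver's answer, and a single ablation moves $s(x_B)$ toward zero but only by $\tfrac12<|g_B|$; ablating any three carriers together, when $n\ge3$, does flip it. Your $t>0$ instead makes every carrier push against the receiver, so no ablation of any subset can ever flip it. This proves the corollary as stated, but it is not the redundancy regime. Your closing claim that ablation here reports dispensability in the sense of understating importance does not hold for your instance, since the carriers really are dispensable at $x_B$.
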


\begin{proof}[Proof by explicit construction]
Fix $n\ge2$, set $\beta_i=1$, $\alpha_i(x_B)=-\tfrac12$ and $\alpha_i(x_A)=1$
for $i=1,\dots,n$ (so $\delta_i=\tfrac32$), and choose $F_0$ with
$\psi(F_0(x_A))=\psi(F_0(x_B))$ (e.g.\ $F_0\equiv0$) and $b$ so that
$g_B=-1$. Every single patch satisfies $\beta_i\delta_i=\tfrac32>1=-g_B$
and therefore flips the receiver by part (a). Every single ablation satisfies
$-\beta_i\alpha_i(x_B)=\tfrac12<1=-g_B$ and therefore does not, by part
(b). Because $\psi(F_0(x_A))=\psi(F_0(x_B))$, the margins are exactly
$g_A=-1+\tfrac32 n$ and $g_B=-1$, so consistency ($g_A>0$) holds for
every $n\ge2$, and the construction exhibits a family of $n\ge2$ carriers
satisfying both requirements simultaneously.
\end{proof}

\begin{remark}[Discussion]
\label{rem:redundancy}
The mechanism behind Corollary~\ref{cor:sufficiency} is redundancy, not an
artifact of the construction: if $n$ carriers each carry enough
\emph{contrast} $\delta_i$ to flip the receiver on their own, patching any one
of them injects that full contrast and flips it: activation-level
sufficiency requires nothing about the other $n-1$ carriers. Ablating one of
the $n$ carriers, in contrast, removes only its own \emph{absolute} level
$\alpha_i(x_B)$ at the receiver; if the receiver's correct decision is instead
supported by the combination of what remains (the untouched carriers and
$F_0$), removing one contributor changes nothing about whether the others
still suffice.

This is the regime that produces a patch-recovery ratio $\kappa>1$, i.e.\ a
single-site patch that \emph{overshoots} the donor's output rather than merely
reaching it: the patched contrast exceeds what is strictly required to cross
the margin, precisely because the same margin is, in the unablated network,
being protected redundantly from the ablation side. Section~\ref{sec:experiments}
reports both halves of this signature on the same trained network (a
single-site patch with $\kappa>1$ at a site whose weight-level ablation
nonetheless leaves the majority of receiver-format inputs answered correctly),
which is an instance of Theorem~\ref{thm:dissociation}, not a separate
phenomenon. It is also, we note, what an observer with access to only one of
the two interventions would report as an anomaly: overshooting recovery from
the patching side, or an apparently dispensable component from the ablation
side.
\end{remark}

\section{Nonlinear Interaction Theorem}
\label{sec:interaction}

Sections~\ref{sec:collapse} and~\ref{sec:dissociation} treat each carrier's
contribution $\alpha_i(x)v_i$ as exact and independent of every other
carrier: ablating a set $S$ simply deletes the corresponding terms from the
sum, leaving the rest untouched. This section identifies precisely when that
independence itself is exact, and exhibits the architectural mechanism through
which two carriers of the same residual block fail to satisfy it.

\subsection{A concrete two-carrier composition}

Vectors are columns throughout this section, and every operator acts on the
left. Consider two carriers written into the same residual stream by a single
block. The first is an attention head with pre-projection activation
$a(x)\in\mathbb R^{d_{\mathrm h}}$ and output projection
$W\in\mathbb R^{d\times d_{\mathrm h}}$, contributing $W a(x)$ to the
post-attention residual $r_1(x):=x+\mathrm{ao}(x)$. The second is the block's
own MLP $M$, applied \emph{after} a normalization layer $N$ acting on $r_1$.
Writing $g:=M\circ N$ for the composition of the normalization and the MLP
(this $g$ is unrelated to the pair margins $g_A,g_B$ of
Section~\ref{sec:collapse}; context disambiguates throughout), the second
carrier's realized value is $g(r_1(x))$, a function \emph{of the first
carrier's output}, not an independent term.

Ablating the head with an orthogonal projector $P=UU^\top$ on $\mathbb R^{d_{\mathrm h}}$
replaces $W$ by $W(I-P)$, equivalently subtracting $\eta(x):=W P a(x)$ from
$r_1$ (Remark~\ref{rem:bridge}); ablating the MLP itself with an orthogonal projector $Q$
on $\mathbb R^{d}$ replaces $M(z)$ by $(I-Q)M(z)$. A
\emph{frozen-activation} estimate of the resulting change in the MLP's
contribution (treating the two carriers as independent, in the sense of
Section~\ref{sec:collapse}) computes $g(r_1(x))$ once, applies $(I-Q)$ if
the MLP itself is ablated, and otherwise leaves the head's ablation with no
effect on it; the \emph{true} computation instead recomputes the MLP from
whatever residual the head's ablation actually produces, then applies
$(I-Q)$. Their difference,
\[
\Delta(x) \;:=\; (I-Q)\Bigl[g\bigl(r_1(x)-\eta(x)\bigr) - g\bigl(r_1(x)\bigr)\Bigr],
\]
is exactly the interaction that the independent-carrier model of
Sections~\ref{sec:model}--\ref{sec:dissociation} sets to zero by assumption.
We now derive $\Delta(x)$ exactly and bound it.

\subsection{The interaction term}

Let $N(r):=\rho(r)\,\gamma\odot r$ with $\rho(r):=(\tfrac1d\|r\|^2+\varepsilon)^{-1/2}$
denote root-mean-square normalization with scale $\gamma$ and $\varepsilon>0$.

\begin{lemma}[Normalization Jacobian]
\label{lem:jacobian}
$N$ is $C^\infty$ on all of $\mathbb R^d$, with
\[
DN(r) \;=\; \rho(r)\operatorname{diag}(\gamma)
\Bigl(I_d - \tfrac{\rho(r)^2}{d}rr^\top\Bigr),
\qquad
\|DN(r)\|_{\mathrm{op}} \;\le\; \|\gamma\|_\infty\,\rho(r).
\]
\end{lemma}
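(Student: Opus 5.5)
The plan is to compute directly from the product form $N(r)=\rho(r)\,\gamma\odot r=\operatorname{diag}(\gamma)\,\rho(r)\,r$. Smoothness comes first. Since $\varepsilon>0$, the quantity $h(r):=\tfrac1d\|r\|^2+\varepsilon$ is a polynomial bounded below by $\varepsilon>0$ on all of $\mathbb R^d$. The map $t\mapsto t^{-1/2}$ is $C^\infty$ on $(0,\infty)$, so $\rho=h^{-1/2}$ is $C^\infty$ everywhere. Then $N$ is a product of a smooth scalar function and a linear map, so it is $C^\infty$ on $\mathbb R^d$. The only point to stress is that $\varepsilon>0$ removes the singularity at $r=0$.

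For the Jacobian I would apply the product rule to $\rho(r)\,r$. The gradient of $h$ is $\tfrac2d r$, so the chain rule gives
\[
\nabla\rho(r)=-\tfrac12 h(r)^{-3/2}\cdot\tfrac2d r=-\tfrac{\rho(r)^3}{d}\,r .
\]
The product rule then yields
\[
D(\rho r)=\rho I_d+r\,\nabla\rho^\top=\rho\Bigl(I_d-\tfrac{\rho^2}{d}rr^\top\Bigr).
\]
Left-multiplying by $\operatorname{diag}(\gamma)$ gives the stated formula.

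For the operator-norm bound I would use submultiplicativity:
\[
\|DN(r)\|_{\mathrm{op}}\le\|\operatorname{diag}(\gamma)\|_{\mathrm{op}}\,\rho(r)\,\bigl\|I_d-\tfrac{\rho^2}{d}rr^\top\bigr\|_{\mathrm{op}} .
\]
The first factor equals $\|\gamma\|_\infty$. The middle matrix is symmetric. It has eigenvalue $1$ on $r^\perp$ and eigenvalue $1-\tfrac{\rho^2}{d}\|r\|^2$ along $r$ (when $r\ne0$).

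The step needing care is showing this last eigenvalue lies in $[0,1]$. Writing $\tfrac1d\|r\|^2=h-\varepsilon$, it becomes
\[
1-\rho^2\bigl(h-\varepsilon\bigr)=1-\tfrac{h-\varepsilon}{h}=\tfrac{\varepsilon}{h}\in(0,1].
\]
So the matrix is positive semidefinite with all eigenvalues in $(0,1]$, and its operator norm is exactly $1$. This gives $\|DN(r)\|_{\mathrm{op}}\le\|\gamma\|_\infty\,\rho(r)$. The case $r=0$ is trivial, since there the matrix is $I_d$.
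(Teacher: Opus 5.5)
Your proposal is correct and follows the paper's proof essentially step by step. Both use $\tfrac1d\|r\|^2+\varepsilon\ge\varepsilon>0$ for smoothness and $\nabla\rho=-\tfrac{\rho^3}{d}r$ for the Jacobian, and both get the bound from $\|\operatorname{diag}(\gamma)\|_{\mathrm{op}}=\|\gamma\|_\infty$ together with the eigenvalues $1$ on $r^\perp$ and $\varepsilon/h=\tfrac{d\varepsilon}{\|r\|^2+d\varepsilon}\in(0,1]$ along $r$. The only cosmetic difference is that you apply the matrix product rule where the paper differentiates componentwise.
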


\begin{proof}
With $T(r):=\tfrac1d\|r\|^2+\varepsilon\ge\varepsilon>0$, $\rho=T^{-1/2}$ is
smooth everywhere and $\partial\rho/\partial r_j=-\tfrac1d\rho^3 r_j$. Since
$N_i(r)=\gamma_i\rho(r)r_i$,
$\partial N_i/\partial r_j=\gamma_i\rho\bigl(\delta_{ij}-\tfrac{\rho^2}{d}r_ir_j\bigr)$,
which is the displayed matrix. The matrix $I_d-\tfrac{\rho^2}{d}rr^\top$ is
symmetric with eigenvalue $\tfrac{d\varepsilon}{\|r\|^2+d\varepsilon}\in(0,1]$
along $r$ and eigenvalue $1$ on $r^\perp$, hence operator norm $1$; combined
with $\|\operatorname{diag}(\gamma)\|_{\mathrm{op}}=\|\gamma\|_\infty$ this
gives the bound.
\end{proof}

\begin{theorem}[First-order interaction formula]
\label{thm:interaction}
For every $x$,
\[
\Delta(x) \;=\; (I-Q)\Bigl[-Dg\bigl(r_1(x)\bigr)\,\eta(x) \;-\; R(x)\Bigr],
\]
\[
R(x) \;:=\; \int_0^1\Bigl[Dg\bigl(r_1(x)-t\eta(x)\bigr)-Dg\bigl(r_1(x)\bigr)\Bigr]\eta(x)\,dt,
\]
where $Dg=DM(N(r))\,DN(r)$ by the chain rule. The identity is exact for every
$x$, every $\eta(x)$, and every $Q$. Moreover $g$ is $C^\infty$
(Lemma~\ref{lem:jacobian} for $N$; $M$ is a composition of linear maps and
smooth elementwise nonlinearities), so $\|R(x)\|\le L(x)\|\eta(x)\|$ with
$L(x):=\sup_{t\in[0,1]}\|Dg(r_1(x)-t\eta(x))-Dg(r_1(x))\|_{\mathrm{op}}$
finite; if in addition $\|D^2g\|_{\mathrm{op}}\le\Lambda$ on the segment
joining $r_1(x)-\eta(x)$ and $r_1(x)$, then $\|R(x)\|\le\Lambda\|\eta(x)\|^2$:
the interaction is second order in the size of the perturbation.
\end{theorem}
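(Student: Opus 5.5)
The plan is to derive the identity from the fundamental theorem of calculus along the segment from $r_1(x)$ to $r_1(x)-\eta(x)$, and then bound the remainder with two elementary estimates.

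First, fix $x$ and write $r:=r_1(x)$, $\eta:=\eta(x)$. I would set $\phi(t):=g(r-t\eta)$ for $t\in[0,1]$. Lemma~\ref{lem:jacobian} makes $N$ smooth, and $M$ is a composition of affine maps with smooth elementwise nonlinearities, so $g=M\circ N$ is $C^\infty$ by the chain rule, with $Dg(r)=DM(N(r))\,DN(r)$. Hence $\phi$ is $C^1$ with $\phi'(t)=-Dg(r-t\eta)\eta$, and
\[
g(r-\eta)-g(r)=\int_0^1\phi'(t)\,dt=-\int_0^1 Dg(r-t\eta)\eta\,dt .
\]
Adding and subtracting $Dg(r)\eta$ inside the integral gives $-Dg(r)\eta-R(x)$, with $R(x)$ exactly as defined in the statement. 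Applying the linear map $(I-Q)$ to both sides yields the displayed formula for $\Delta(x)$. No hypothesis on $\eta$ or $Q$ is used, since $(I-Q)$ is linear and the integral identity holds unconditionally; this is what makes the formula exact.

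Second, for the first-order bound, I would bring the operator norm inside the vector-valued integral, $\|\int_0^1 h\|\le\int_0^1\|h\|$:
\[
\|R(x)\|\le\int_0^1\|Dg(r-t\eta)-Dg(r)\|_{\mathrm{op}}\,\|\eta\|\,dt\le L(x)\|\eta\| .
\]
$L(x)$ is finite because $t\mapsto Dg(r-t\eta)$ is continuous on the compact interval $[0,1]$, since $g$ is $C^1$.

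Third, for the second-order bound, I would apply the mean value inequality to $Dg$ along the segment. For each $t$,
\[
\|Dg(r-t\eta)-Dg(r)\|_{\mathrm{op}}\le\sup_{\tau\in[0,t]}\|D^2g(r-\tau\eta)\|_{\mathrm{op}}\,t\|\eta\|\le\Lambda t\|\eta\| ,
\]
where $\|D^2g\|_{\mathrm{op}}$ is the norm of $D^2g$ as a bilinear map, or equivalently as a map into operators. Integrating gives $\|R(x)\|\le\Lambda\|\eta\|^2\int_0^1 t\,dt=\tfrac12\Lambda\|\eta\|^2\le\Lambda\|\eta\|^2$, which is the stated bound with a factor to spare.

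The only step needing care, and so the main obstacle, is this last one. The norm on $D^2g$ must be specified so that $\|D^2g(y)[\eta]\|_{\mathrm{op}}\le\|D^2g(y)\|\,\|\eta\|$. The mean value inequality must also be applied in its integral form, $Dg(r-t\eta)-Dg(r)=-\int_0^t D^2g(r-\tau\eta)[\eta]\,d\tau$, and not as a pointwise equality, which fails for vector-valued maps. Everything else is routine calculus.
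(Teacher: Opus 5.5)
Your proposal is correct and follows essentially the same route as the paper's proof: the fundamental theorem of calculus applied to $\phi(t)=g(r_1(x)-t\eta(x))$, adding and subtracting $Dg(r_1(x))\eta(x)$, left-multiplying by $(I-Q)$, and then bounding $R(x)$ first through the supremum $L(x)$ and then through the mean value inequality on $Dg$. The one difference is that you integrate the factor $t$ to get $\tfrac12\Lambda\|\eta(x)\|^2$, whereas the paper bounds $t\le1$ first (giving $L(x)\le\Lambda\|\eta(x)\|$); your constant is slightly sharper, and the stated bound follows either way.
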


\begin{proof}
Let $\phi(t):=g(r_1(x)-t\eta(x))$, so $\phi'(t)=-Dg(r_1(x)-t\eta(x))\eta(x)$
and, by the fundamental theorem of calculus,
$g(r_1(x)-\eta(x))-g(r_1(x))=\phi(1)-\phi(0)=\int_0^1\phi'(t)\,dt$. Adding and
subtracting $Dg(r_1(x))\eta(x)$ inside the integral and left-multiplying by
$(I-Q)$ gives the displayed identity. The bound on $R(x)$ follows from
$\|R(x)\|\le\sup_t\|Dg(r_1(x)-t\eta(x))-Dg(r_1(x))
\|_{\mathrm{op}}\|\eta(x)\|$; if $D^2g$ is bounded by $\Lambda$ on the segment,
the mean value inequality gives
$\|Dg(r_1(x)-t\eta(x))-Dg(r_1(x))\|_{\mathrm{op}}\le\Lambda t\|\eta(x)\|
\le\Lambda\|\eta(x)\|$ for all $t\in[0,1]$, hence $L(x)\le\Lambda\|\eta(x)\|$.
\end{proof}

\begin{corollary}[Single-carrier exactness]
\label{cor:exactness}
$\Delta(x)\equiv0$ identically, for every $x$, if $\eta(x)\equiv0$ (in
particular whenever only the MLP, and not the head, is ablated), since both
terms inside the bracket of Theorem~\ref{thm:interaction} then vanish. The
same holds trivially if $Q=I$, since the prefactor $(I-Q)$ annihilates the
bracket regardless of $\eta$ (a case of no practical interest under
Assumption~\ref{ass:lowrank}, recorded only for completeness). Ablating the
head \emph{alone} ($Q=0$, generically $\eta(x)\ne0$) satisfies neither
condition: Theorem~\ref{thm:interaction} still applies and bounds
$\Delta(x)=g(r_1(x)-\eta(x))-g(r_1(x))$, but nothing forces this quantity to
vanish, and it generically does not: ablating a head changes its own
layer's MLP output whether or not the MLP is separately ablated.
Proposition~\ref{prop:readout} below makes precise what this costs the
idealized model's own prediction in that case.
\end{corollary}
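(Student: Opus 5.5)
The plan is to read every claim of Corollary~\ref{cor:exactness} off the exact identity of Theorem~\ref{thm:interaction}, or off the definition of $\Delta(x)$ itself, so that no estimate is involved. There are three cases.

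\textbf{Case $\eta\equiv0$.} First I would treat the case $\eta(x)\equiv0$. It is cleanest to argue from the definition rather than from the bracket. Substituting $\eta(x)=0$ into
\[
\Delta(x)=(I-Q)\bigl[g(r_1(x)-\eta(x))-g(r_1(x))\bigr]
\]
gives $(I-Q)[g(r_1(x))-g(r_1(x))]=0$ for every $Q$ and every $x$. As a consistency check I would also confirm this against Theorem~\ref{thm:interaction}. The first-order term $-Dg(r_1(x))\eta(x)$ vanishes. So does $R(x)$, since its integrand is a matrix applied to $\eta(x)=0$.

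To cover the parenthetical claim, I would note that ablating only the MLP means $P=0$. Then $\eta(x)=WPa(x)=0$ identically, as in Remark~\ref{rem:bridge}, so this is a special case of the first.

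\textbf{Case $Q=I$.} Here the prefactor $I-Q$ is the zero operator. It annihilates any vector, and in particular the bracket, whatever $\eta$ is.

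\textbf{Head ablated alone.} For the head alone, I would set $Q=0$. The definition then gives $\Delta(x)=g(r_1(x)-\eta(x))-g(r_1(x))$, and Theorem~\ref{thm:interaction} applies verbatim with $(I-Q)=I$. This yields
\[
\Delta(x)=-Dg(r_1(x))\eta(x)-R(x),\qquad \|R(x)\|\le\Lambda\|\eta(x)\|^2
\]
under the curvature hypothesis.

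The claim of generic non-vanishing is the only step that is not immediate. Here I would argue from the first-order term. If $Dg(r_1(x))\eta(x)\neq0$, then for a small enough perturbation (for instance, rescaling $P$'s effect so that $\Lambda\|\eta\|^2<\|Dg\,\eta\|$) the remainder cannot cancel it, so $\Delta(x)\neq0$. That $Dg(r_1)\eta\ne0$ holds generically follows because $Dg=DM(N(r))\,DN(r)$. By Lemma~\ref{lem:jacobian}, $DN(r)$ is invertible whenever $\gamma$ has no zero entries, since its eigenvalues are positive. Thus $Dg\,\eta=0$ requires $DN(r_1)\eta\in\ker DM$, which is a nongeneric coincidence.

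\textbf{Main obstacle.} I expect this last point to be the main obstacle, because "generically" is informal. I would state it at the strength that is actually proved: $\Delta\equiv0$ is not implied. I would exhibit one explicit instance, say a one-dimensional MLP with a strictly monotone nonlinearity and $\gamma\ne0$, where $\Delta(x)\ne0$. That instance would serve as a counterexample showing that nothing forces vanishing.
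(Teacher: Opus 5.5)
Your proposal is correct. For the two sufficient conditions it follows the paper's argument. The paper reads $\eta\equiv0$ and $Q=I$ off the bracket and the prefactor of Theorem~\ref{thm:interaction}, as you do. Your reading of the first case directly from the definition of $\Delta(x)$ is equivalent and slightly cleaner, since it never needs $R(x)$.

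Where you differ is the head-alone case. The paper's proof stops at observing that $Q=0$ implies neither sufficient condition. That shows the corollary's guarantee is unavailable, but not that $\Delta(x)$ can actually be nonzero. Your explicit instance does show it. With $d=1$, $N(r)=\gamma r/\sqrt{r^2+\varepsilon}$ is strictly monotone for $\gamma\ne0$, so $g=M\circ N$ is injective once $M$ is strictly monotone. Hence $\Delta(x)=g(r_1(x)-\eta(x))-g(r_1(x))\ne0$ whenever $\eta(x)\ne0$. Your route therefore proves ``nothing forces this quantity to vanish'' rather than only noting that the sufficient conditions fail. The word ``generically'' remains informal in both.

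Three small repairs to the heuristic part:
\begin{itemize}
\item The eigenvalue positivity you cite from Lemma~\ref{lem:jacobian} belongs to the factor $I-\tfrac{\rho^2}{d}rr^\top$, not to $DN(r)$ itself. $DN(r)$ has negative eigenvalues when $\gamma$ has negative entries. Invertibility still holds, as a product of invertible factors.
\item An orthogonal projector $P$ cannot be rescaled. Your first-order argument therefore gives $\Delta\ne0$ only along $t\eta(x)$ for small $t$, not at the actual $\eta(x)=WPa(x)$. This is why the counterexample, not that argument, should carry the claim.
\item The nonlinearity in the counterexample must genuinely be strictly monotone, e.g.\ $\tanh$ rather than SiLU.
\end{itemize}
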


\begin{proof}
If $\eta(x)=0$: the bracket in Theorem~\ref{thm:interaction} is $0-0=0$
independently of $Q$. If $Q=I$: the prefactor $(I-Q)$ is the zero operator,
independently of the bracket. Neither is implied by $Q=0$ alone.
\end{proof}

Corollary~\ref{cor:exactness} concerns $\Delta(x)$, the quantity the idealized
model omits. A different single-carrier exactness statement, easily confused
with it, concerns instead the \emph{measurement protocol} used to estimate
$\Delta$ empirically, and holds for a head just as much as for an MLP.

\begin{proposition}[Single-carrier patch--edit equivalence]
\label{prop:patchedit}
Fix an input $x$ and a single component whose write-in to the residual stream
is $Wc(x)$, with $c(x)$ its own activation and $W$ its output matrix. Ablate
it by an orthogonal projector, applied either on the activation side ($P$, so
that the intended write-in is $W(I-P)c(x)$) or on the stream side ($Q$, giving
$(I-Q)Wc(x)$). Then the \emph{weight-edit} route (edit $W$ and run a fresh
forward pass) and the \emph{frozen-activation} route (leave all weights
intact, replace this component's realized output by its projected value, and
recompute everything downstream) assign identical values to every node of
the network.
\end{proposition}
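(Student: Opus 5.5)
The plan is to treat the network as a directed acyclic computation graph in topological order, so that the value at each node is a deterministic function of the input and of the values at its parent nodes, computed with that node's own weights. Both routes are run on the same input $x$, and the proof is an induction over the topological order. The claim at each step is that the two routes assign the same value to that node. The only nodes whose computing map differs between the routes are the component's write-in node $w:=Wc(\cdot)$ in the weight-edit route and the corresponding overwritten node in the frozen-activation route. Every other node uses identical weights and identical functional form in both.

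The first step covers the nodes upstream of the write-in, including $c(x)$ itself. The activation $c(x)$ is computed from the component's input weights and the residual stream before the component writes. It does not depend on $W$, because the component's own output is not among its own ancestors in a feedforward residual network. Hence all of its ancestors, and $c(x)$ itself, are untouched by either route, and they agree trivially by induction. The same holds for any node that is not a descendant of $w$.

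The second step is the write-in node itself. In the weight-edit route, with the activation-side projector, the edited matrix $W(I-P)$ applied to the common value $c(x)$ gives $W(I-P)c(x)$. In the frozen-activation route, the realized output $Wc(x)$ is replaced by its projected value, which is $W(I-P)c(x)$ by definition. The stream-side case is identical with $(I-Q)W$ in place of $W(I-P)$: the two values are equal by associativity, $(I-Q)(Wc(x))=((I-Q)W)c(x)$. This is the same identity that Remark~\ref{rem:bridge} records.

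The third step covers all descendants of $w$. In both routes every such node is computed by the same unedited map applied to parent values, and those parent values agree by the induction hypothesis. The weight-edit route recomputes them in a fresh forward pass, and the frozen-activation route recomputes them downstream by definition. So they agree, including every later MLP, normalization and readout. Non-descendants were already handled in the first step.

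The main obstacle is the first step. One has to make sure that $W$ enters the graph at exactly one place, namely this component's write-in. If $W$ were tied or shared with another node, or if $c(x)$ depended on the component's own output, the edit would change additional nodes. I would state explicitly that the output matrix is used only at this write-in. This holds for an attention head's output projection and for an MLP's down-projection in a standard transformer, and with that in place the induction closes. I would also note the contrast with Theorem~\ref{thm:interaction}: the two routes here agree on every node, including the MLP recomputed from $r_1(x)-\eta(x)$. The quantity $\Delta$ instead measures the gap between this common true computation and the \emph{frozen} estimate that does not recompute downstream, so there is no conflict with Corollary~\ref{cor:exactness}.
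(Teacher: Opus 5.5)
Your proof is correct and follows the paper's argument: $c(x)$ is computed strictly upstream of $W$, associativity makes the two routes' write-ins identical ($W((I-P)c(x))=(W(I-P))c(x)$ and $(I-Q)(Wc(x))=((I-Q)W)c(x)$), and every downstream node is then computed from identical inputs by identical maps. Your topological-order induction and your explicit requirement that $W$ be used only at this write-in (no weight tying) spell out what the paper's phrase ``every other weight is untouched'' leaves implicit.
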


\begin{proof}
$c(x)$ is computed strictly upstream of $W$, so editing $W$ does not change
it. On the activation side, applying the projector to the activation gives
write-in $W\bigl((I-P)c(x)\bigr)$ and applying it to the matrix gives
$\bigl(W(I-P)\bigr)c(x)$; these are the same vector. On the stream side,
$(I-Q)\bigl(Wc(x)\bigr)=\bigl((I-Q)W\bigr)c(x)$ likewise. The two routes
therefore agree on this component's write-in, every other weight is untouched,
and hence every downstream node is computed from identical inputs by identical
maps.
\end{proof}

\begin{remark}
\label{rem:patchedit-estimation}
Proposition~\ref{prop:patchedit} holds for \emph{any} projector, in particular
for one estimated from data: both routes use the same projector, so an error
in choosing it is common to the two and cancels in their difference. A
measured gap between the two routes on a single-carrier ablation is therefore
neither evidence about $\Delta(x)$ nor evidence about how well the ablated
direction was estimated, a point Section~\ref{sec:experiments} returns to
with measurements.
\end{remark}

\begin{proposition}[Propagation to the readout]
\label{prop:readout}
Suppose that the block of
Section~\ref{sec:interaction} feeds the readout directly
(Remark~\ref{rem:propagation} discusses what is and is not available when it
does not), i.e.\
$F(x)=r_1(x)+g(r_1(x))+E(x)$ where $E(x)$ collects $F_0(x)$ and every carrier
not drawn from this block. Let $S\subseteq\{1,\dots,k\}$ consist only of
carriers drawn from this block's own head and/or its own MLP (so that no
carrier in $S$ lies upstream of $r_1(x)$ and $E(x)$ is unaffected by
ablating $S$): $\eta(x)$ is the head's contribution if the head's carrier is
in $S$ and $0$ otherwise, and $Q$ is the MLP's ablation projector if the
MLP's carrier is in $S$ and $0$ otherwise (Remark~\ref{rem:bridge} identifies
both as genuine elements of $\{v_1,\dots,v_k\}$). Then the idealized
selector $s_S(x)$ of Section~\ref{sec:collapse} and the selector
$s_S^{\mathrm{true}}(x)$ of the genuinely weight-edited network agree up to
exactly the interaction term of Theorem~\ref{thm:interaction}:
\[
s_S^{\mathrm{true}}(x) \;=\; s_S(x) \;+\; \psi\bigl(\Delta(x)\bigr), \qquad
\bigl|\psi(\Delta(x))\bigr| \;\le\; \|\psi\|_{\mathrm{op}}\Bigl(\|Dg(r_1(x))\|_{\mathrm{op}}\|\eta(x)\|+L(x)\|\eta(x)\|\Bigr),
\]
with $L(x)$ as in Theorem~\ref{thm:interaction}, and
$\bigl|\psi(\Delta(x))\bigr|\le\|\psi\|_{\mathrm{op}}\bigl(\|Dg(r_1(x))\|_{\mathrm{op}}\|\eta(x)\|+\Lambda\|\eta(x)\|^2\bigr)$
under that theorem's additional $\|D^2g\|_{\mathrm{op}}\le\Lambda$ hypothesis.
In particular $s_S^{\mathrm{true}}(x)=s_S(x)$ exactly whenever
Corollary~\ref{cor:exactness}'s sufficient conditions hold, and
Lemma~\ref{lem:flip} and Theorem~\ref{thm:collapse}, stated for $s_S$,
transfer to the true network without error in that case; otherwise they
transfer with an error bounded as above, in particular whenever only the
head, and not the MLP, is ablated.
\end{proposition}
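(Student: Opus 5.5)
The plan is to compute both selectors explicitly from the decomposition $F(x)=r_1(x)+g(r_1(x))+E(x)$ and read off their difference. Throughout I use Assumption~\ref{ass:readout}, $s(x)=\psi(F(x))+b$. The hypothesis on $S$ is what makes this work: no carrier in $S$ sits upstream of $r_1(x)$, so $E(x)$ is identical in the idealized and true networks. It therefore cancels, and it suffices to track the block's own contribution $r_1+g(r_1)$.

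\textbf{Step 1: the idealized selector.} The idealized model deletes the terms of $S$ additively (Remark~\ref{rem:bridge}). Removing the head carrier subtracts $\eta(x)$ from the stream. Removing the MLP carrier subtracts $Q\,g(r_1(x))$, with the MLP evaluated at the \emph{unablated} residual; this is the frozen-activation estimate. Hence
\[
F_S(x)=r_1(x)-\eta(x)+(I-Q)g(r_1(x))+E(x).
\]

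\textbf{Step 2: the true weight-edited network.} The edited head writes $W(I-P)a(x)$, so the post-attention residual is $r_1(x)-\eta(x)$. Here I use that $a(x)$ is computed upstream of $W$, exactly as in Proposition~\ref{prop:patchedit}. The MLP is then recomputed on this residual and projected, which gives
\[
F^{\mathrm{true}}_S(x)=r_1(x)-\eta(x)+(I-Q)g(r_1(x)-\eta(x))+E(x).
\]
Subtracting the two expressions gives $F^{\mathrm{true}}_S-F_S=\Delta(x)$ exactly, by the definition of $\Delta$. Applying $\psi$, with $b$ cancelling, yields $s^{\mathrm{true}}_S=s_S+\psi(\Delta)$. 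The four cases (head in $S$ or not, MLP in $S$ or not) are absorbed by the conventions $\eta\equiv0$ and $Q=0$.

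\textbf{Step 3: the bounds.} I substitute the identity of Theorem~\ref{thm:interaction} and use the triangle inequality. The factor $\|I-Q\|_{\mathrm{op}}\le1$ holds because $I-Q$ is an orthogonal projector; in the case $Q=0$ it equals $I$. This gives
\[
\|\Delta\|\le\|Dg(r_1)\|_{\mathrm{op}}\|\eta\|+\|R\|,
\]
and then $|\psi(\Delta)|\le\|\psi\|_{\mathrm{op}}\|\Delta\|$. Inserting $\|R\|\le L\|\eta\|$, or $\|R\|\le\Lambda\|\eta\|^2$ under the curvature hypothesis, gives the two displayed bounds. For the final claims: when Corollary~\ref{cor:exactness}'s conditions hold, $\Delta\equiv0$, so $s^{\mathrm{true}}_S=s_S$. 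Lemma~\ref{lem:flip} and Theorem~\ref{thm:collapse}, which are statements about the values of $s_S$, then transfer verbatim. Otherwise they transfer with the stated additive error.

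\textbf{Main obstacle.} The delicate step is Step 1: making precise that the idealized deletion of the MLP carrier is $Q\,g(r_1(x))$ evaluated at the unablated $r_1$. This means identifying the carriers of Remark~\ref{rem:bridge} for a stream-side projector $Q$, namely $Q=\sum_j u_ju_j^\top$ with carriers $v_j=u_j$ and $\alpha_j(x)=\langle u_j,g(r_1(x))\rangle$. It also requires noting that the idealized model, by Assumption~\ref{ass:additivity}, keeps $\alpha_j$ at its unablated value. Once that identification is fixed, everything else is bookkeeping.
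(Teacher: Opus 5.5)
Your proposal is correct and follows the paper's proof essentially step for step. You write $F_S=(r_1-\eta)+(I-Q)g(r_1)+E$ and $F_S^{\mathrm{true}}=(r_1-\eta)+(I-Q)g(r_1-\eta)+E$, subtract to get $\Delta$, apply $\psi$, and bound via Theorem~\ref{thm:interaction}, $\|I-Q\|_{\mathrm{op}}\le1$ and $|\psi(\Delta)|\le\|\psi\|_{\mathrm{op}}\|\Delta\|$. Your explicit identification of the stream-side MLP carriers ($v_j=u_j$, $\alpha_j(x)=\langle u_j,g(r_1(x))\rangle$) is a detail the paper leaves implicit, since Remark~\ref{rem:bridge} only writes out the activation-side case.
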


\begin{proof}
By Assumptions~\ref{ass:additivity} and~\ref{ass:stable}, $E(x)$ is the same
function of $x$ in the clean network and in $F_S(x)$; by the hypothesis that
$S$ draws only from this block, it is also unaffected by the true weight
edit. The idealized model \eqref{eq:ablation} deletes the head's carrier
exactly and the MLP's carrier by subtracting its removed mass computed on
the \emph{clean} residual $r_1(x)$: every surviving carrier's coefficient,
the MLP's included, is by \eqref{eq:ablation} the same function of $x$ as in
the unablated network:
\[
F_S(x) \;=\; F(x)-\eta(x)-Qg(r_1(x)) \;=\; \bigl(r_1(x)-\eta(x)\bigr) + (I-Q)g\bigl(r_1(x)\bigr) + E(x).
\]
The true, weight-edited network instead recomputes the MLP on the genuinely
ablated residual before applying its own ablation:
\[
F_S^{\mathrm{true}}(x) \;=\; \bigl(r_1(x)-\eta(x)\bigr) + (I-Q)g\bigl(r_1(x)-\eta(x)\bigr) + E(x).
\]
Subtracting, $F_S^{\mathrm{true}}(x)-F_S(x) = (I-Q)\bigl[g(r_1(x)-\eta(x))-g(r_1(x))\bigr]=\Delta(x)$
by definition. Applying $\psi$ and adding $b$ gives
$s_S^{\mathrm{true}}(x)=s_S(x)+\psi(\Delta(x))$; the stated bounds follow
from Theorem~\ref{thm:interaction}'s bounds on $\|\Delta(x)\|$,
$|\psi(\Delta(x))|\le\|\psi\|_{\mathrm{op}}\|\Delta(x)\|$, and
$\|I-Q\|_{\mathrm{op}}\le1$ since $Q$ is an orthogonal projector.
\end{proof}

\begin{remark}[Interpretation and limits]
Corollary~\ref{cor:exactness} and Proposition~\ref{prop:readout} together
close the loop opened by Corollary~\ref{cor:invariance} in
Section~\ref{sec:collapse}: two ablated subsets can satisfy the idealized
collapse hypotheses exactly, and hence agree with the true network, whenever
$\psi(\Delta(x))=0$ for each, which $\eta(x)\equiv0$ (an MLP-alone ablation)
guarantees; otherwise the idealized selector and the true one differ by
exactly $\psi(\Delta(x))$, which is bounded and generically nonzero, though
not necessarily so at every $x$: $\Delta(x)$ may happen to lie in $\ker\psi$
there. Ablating a head at all, whether or not its own layer's MLP
is separately ablated, therefore generically (though not necessarily)
perturbs the idealized prediction; only an MLP-alone ablation is seen exactly
for every $x$. Two limitations bound the scope of this result and are not
closed here (a third, the absence of a closed form for the curvature
constant, is closed in a companion analysis): Proposition~\ref{prop:readout}
tracks $\Delta(x)$'s effect on the selector exactly only when this block
feeds the readout directly, and propagating it through further nonlinear
layers when it does not (the general case Remark~\ref{rem:propagation}
discusses only at the level of $\|\Delta(x)\|$, not of $\psi(\Delta(x))$)
remains open; and the two-carrier composition considered here (one attention
head and its own layer's MLP) does not by itself cover interactions between
carriers in different layers. A companion analysis shows exactly how far the
same technique reaches into that case: the same-block discrepancy at each
touched layer is isolated exactly, in closed form only at the layer that
feeds the readout directly, and what is left over (both the propagated
same-block effect at every other touched layer and the genuinely cross-layer
part) is isolated, not hidden, as a single separately measurable
remainder.
\end{remark}

\begin{remark}[Propagation across layers]
\label{rem:propagation}
The analysis above stops at the output of one block; in a deep network,
$\Delta(x)$ is written into the residual stream and carried forward by every
subsequent layer. Two effects govern its fate, both already implicit in the
tools developed here. Every subsequent normalization layer acts on it through
the same Jacobian bound as Lemma~\ref{lem:jacobian}: since $\rho(r)$ shrinks
as $\|r\|$ grows, normalization attenuates a fixed-size perturbation more
strongly on residual streams of larger norm, one layer later. Every
subsequent linear map, in turn, can amplify it by up to its own operator
norm. A perturbation surviving $L$ further layers is therefore bounded,
heuristically, by a product of $L$ alternating attenuation and amplification
factors, none of which is guaranteed to stay below one. Whether this product
typically grows, shrinks, or is absorbed by the residual stream's own
accumulation on trained weights is an empirical question the present model
does not answer. A companion analysis turns this into a sharper question than
``how does a perturbation propagate'': it gives an exact decomposition that
isolates the entire cross-layer effect, for an arbitrary ablated subset
spanning any number of layers, as one remainder term separate from the
same-block discrepancy at each touched layer, closed form only at the layer
feeding the readout directly and pinned to zero at any layer whose own
carriers are MLP-only, without bounding either open piece there either, which
we leave, along with the present question, for future work.
\end{remark}

\section{Illustrative Experiments}
\label{sec:experiments}

The theorems of Sections~\ref{sec:collapse}--\ref{sec:interaction} are
statements about an idealized model. This section checks that their
predictions (collapse under joint ablation, patch/ablation dissociation,
and an interaction term that vanishes when the MLP alone is ablated but not,
in general, otherwise) are realized in an actual trained network, rather
than being vacuous under the stated assumptions. We report the minimum
needed to make this check, not a general empirical study of the underlying
task.

\paragraph{Task and vocabulary.} We reuse the setting and the measurements of
the empirical study these theorems were written to explain, restated here in
the minimum detail needed to read the table and the two figures. Small
transformers are trained on a synthetic associative-recall task: a context of
key--value pairs is followed by a two-token query, a marker $m_A$ or $m_B$
and a displayed token, and the answer is the value $v(k)$ of the true key $k$.
The marker selects the surface format. Under $m_A$ the displayed token
\emph{is} the true key; under $m_B$ it is $\sigma(k)$ for a fixed derangement
$\sigma$, so the network must apply $\sigma^{-1}$ before looking the value up.
Both formats demand the same answer, so a competent network implements exactly
a conditional of the kind formalized in Section~\ref{sec:model}, and the two
formats are the two branches $A$ and $B$ of Definition~\ref{def:pair}.

Each branch has one \emph{systematic} wrong answer, which is what makes a
collapse observable rather than merely a drop in accuracy. On format $A$ the
wrong reading is the \emph{inverted} one, $v(\sigma^{-1}(k))$: the network
applies $\sigma^{-1}$ where it should not. On format $B$ it is the
\emph{literal} one, $v(\sigma(k))$: the network reads the displayed token
at face value. All rates below are measured on filtered evaluation
distributions on which the wrong reading is a well-defined token distinct from
the correct one, so ``inverted-$A$ rate $0.93$'' means the ablated network
returned that specific token on $93\%$ of filtered format-$A$ inputs.

\paragraph{Carriers and ablation configurations.} Across five independently
trained instances, a greedy activation-patching search identifies a small set
of components (\emph{carriers}, in the sense of Section~\ref{sec:model})
whose activation swap between formats flips the answer, a redundant code
of the kind reported elsewhere in superposed and self-repairing networks
\cite{elhage2022,mcgrath2023}. Each carrier is a single attention head or a
single MLP, i.e.\ the concrete realization of a direction $v_i$. We then
ablate carriers directly in weight space, projecting the low-rank subspace of
their donor--receiver activation deltas out of the corresponding weight
matrices, exactly as in Remark~\ref{rem:bridge}: an instance of the deletion
\eqref{eq:ablation} with the directions estimated from data. Three subsets
$S$ are used per instance: \textbf{D1}, the single strongest carrier alone;
\textbf{DJ}, all first-layer carriers jointly (two of them, in every instance
here); and \textbf{DJA}, all carriers, including those in deeper layers. The
three are nested, $S_{\mathrm{D1}}\subset S_{\mathrm{DJ}}\subseteq
S_{\mathrm{DJA}}$. Seed 11 has
only two carriers in total and therefore has no DJA configuration, which is
why the five instances yield $14$ configurations rather than $15$.

\begin{table}[htbp]
\centering
\small
\begin{tabular}{lccc}
\toprule
Instance & Total rank $\sum k$ & Best inverted-$A$ rate & Best literal-$B$ rate \\
\midrule
inst.\ 2  & $5$ & $.86$ & $.01$ \\
seed 11   & $6$ & $.41$ & $.47$ \\
seed 22   & $4$ & $.49$ & $.27$ \\
seed 33   & $5$ & $.87$ & $.20$ \\
seed 44   & $4$ & $.93$ & $.977$ \\
\bottomrule
\end{tabular}
\caption{Redundancy ($\ge2$ carriers per instance) and low rank ($\sum k\le6$)
hold on $5/5$ instances. The two rightmost columns report, for each instance,
the highest rate at which ablation collapses the network onto each of the two
unconditional readers of Theorem~\ref{thm:collapse} (the inverted reader on
format $A$, the literal reader on format $B$) over the configurations D1,
DJ and DJA. Each is therefore a maximum over the $2$--$3$ configurations
available for that instance, not a single measurement, and should be read as
``some tested subset achieves this'' rather than as a typical value.
Rates are measured on the filtered distributions; the un-ablated
baselines checked on these instances sit at $0$ there (seed 44's is the
leftmost group of Figure~\ref{fig:seed44}), so each entry is a collapse rate
over a zero baseline, not a difference against a nonzero one. Seed 44 is the
only instance in which \emph{both} collapses are realized on the same trained
weights, each with the untouched format retained at rate $\ge0.96$, by two
\emph{nested} subsets $S$.}
\label{tab:results}
\end{table}

Table~\ref{tab:results} illustrates Theorem~\ref{thm:collapse} directly: on
four of five instances, ablation collapses the network onto one unconditional
reader but not, at the tested subsets, the other. Seed 44 is the exception
that matters. There D1 deletes a single carrier, the rank-one head $h3$, and
collapses the network onto the inverted reader (inverted-$A$ rate $0.93$,
format $B$ retained at $0.963$); DJ deletes $h3$ \emph{together with} the same
block's MLP and collapses it onto the literal reader (literal-$B$ rate
$0.977$, format $A$ retained at $0.997$). The two subsets are nested,
$S_{\mathrm{D1}}\subset S_{\mathrm{DJ}}$: enlarging a subset that already
produces a clean collapse does not deepen that collapse but reverses its
polarity. By Corollary~\ref{cor:invariance} this cannot happen inside the
idealized model, so at least one of the two subsets must violate hypothesis
(i) or (ii). Both subsets ablate the head $h3$, so by
Corollary~\ref{cor:exactness} neither is guaranteed the exact agreement
Proposition~\ref{prop:readout} reserves for an MLP-alone ablation; DJ adds
the same block's MLP on top, giving Theorem~\ref{thm:interaction}'s
interaction term every freedom to differ in size and sign between the two
configurations, which is sufficient to flip which branch the idealized
hypotheses would otherwise identify.

\begin{figure}[htbp]
\centering
\includegraphics[width=\linewidth]{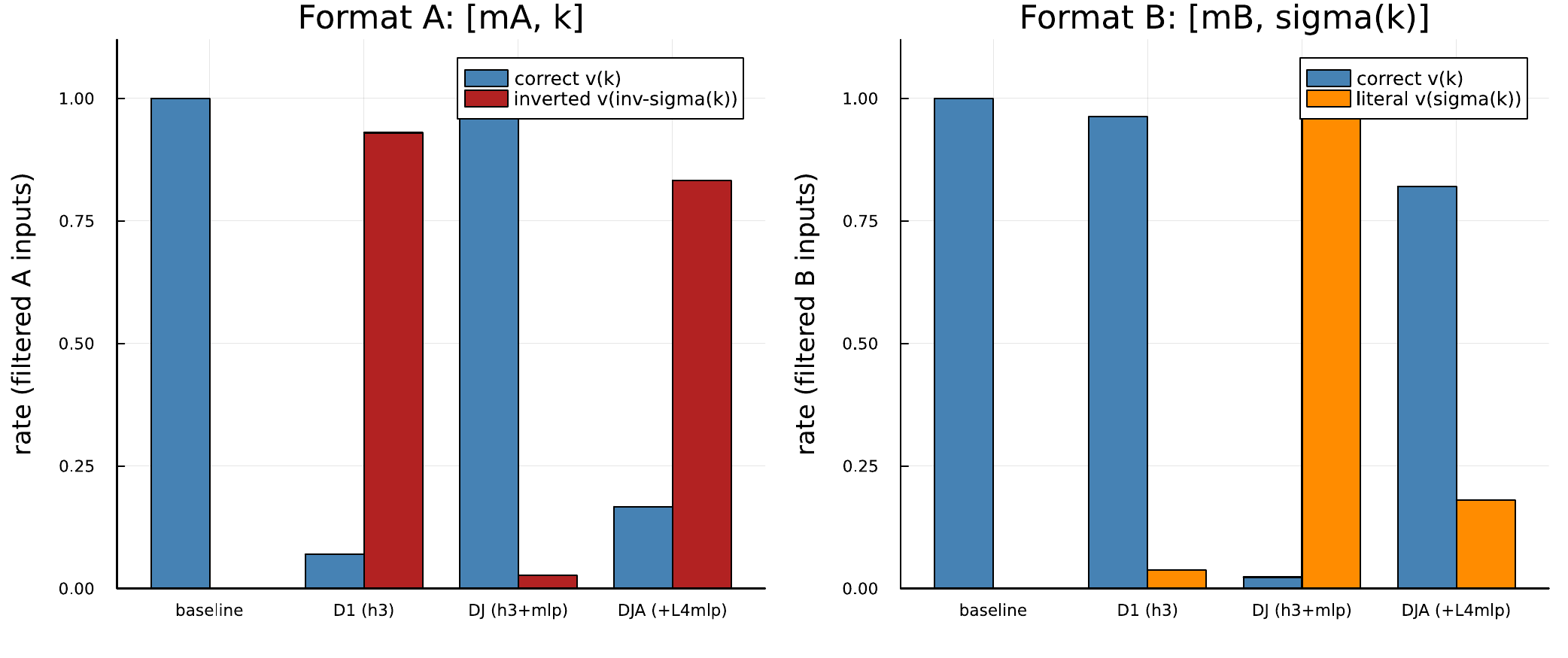}
\caption{Seed 44: two \emph{nested} ablated subsets
($S_{\mathrm{D1}}\subset S_{\mathrm{DJ}}$) collapse the same trained network
onto \emph{opposite} unconditional readers, each with the untouched format
retained at rate $\ge0.96$. Left: filtered format-$A$ inputs, correct answer
$v(k)$ against the inverted reading $v(\sigma^{-1}(k))$. Right: filtered
format-$B$ inputs, correct answer against the literal reading $v(\sigma(k))$.
The leftmost group of each panel is the un-ablated baseline. This is the
phenomenon Corollary~\ref{cor:invariance} shows cannot occur within the
idealized model, and that Theorem~\ref{thm:interaction} traces to an
interaction term that Corollary~\ref{cor:exactness} guarantees only an
MLP-alone ablation avoids: neither D1 nor DJ qualifies for that guarantee
here, since both ablate the head.}
\label{fig:seed44}
\end{figure}

\paragraph{Patching versus ablation on the same site.} The dissociation of
Theorem~\ref{thm:dissociation} is visible on a further instance of the same
task, on which the patch search returns a \emph{single} sufficient site (the
first-layer MLP output). Patching that site alone gives a median recovery of
$\kappa=1.043$ over $20$ independent validation pairs (values $1.04$--$1.22$
on the pairs used for the search) and an argmax flip rate of $0.85$: recovery
above $1$, the overshoot signature of Remark~\ref{rem:redundancy}. Ablating the weights
feeding the same site leaves the receiver format correct on $57\%$ of inputs:
patch-sufficient, but far from ablation-necessary. The error it does
produce is nonetheless the predicted one: after ablation, correct plus
literal accounts for $0.993$ of the answers, so what the ablation removes is
the conditional and not the associative lookup. Sufficiency under patching
and necessity under ablation are measured on the same component of the same
network and disagree, exactly as $\beta_i\delta_i$ and
$\beta_i\alpha_i(x_B)$ are permitted to.

\paragraph{Are the collapse hypotheses ever available?} Theorem~\ref{thm:collapse}'s
conditions are exact equalities and hold on a null set, so the question that
can actually be asked of a trained network is the one
Proposition~\ref{prop:robust} poses: are the tolerances small enough, relative
to $|\bar s|$, to certify anything? We measured $\bar q_S$, $\Psi_S$ and
$\bar s$ per matched pair on all $39$ configurations ($120$ pairs each, on
probe seeds again disjoint from every other measurement here). The identities
\eqref{eq:robust} reproduce to $0$ in floating point on all $4680$
pair-configurations, which checks the algebra rather than the network.

The certificate is rarely available. Taking $\varepsilon_1=|\bar q_S|$ and
$\varepsilon_2=|\Psi_S|$ at their measured per-pair values, the condition
$|\bar s|>\varepsilon_1+\tfrac12\varepsilon_2$ holds on $10.0\%$ of
pair-configurations; no configuration satisfies it on more than $71.7\%$ of its
pairs, and none on $90\%$. The reason is structural rather than incidental.
$\bar s$ is the \emph{mean} of a positive and a negative margin on a matched
pair, so it is small by construction (median $|\bar s|/\Phi=0.083$), while
the two tolerances are of the order of the margins themselves, median
$|\bar q_S|/\Phi=0.14$ and $|\Psi_S|/\Phi=0.21$. In the median the sufficient
condition misses by a factor of $5.3$.

It is nonetheless a \emph{sufficient} condition, and a conservative one: the
conclusion it certifies (both selectors on the side of $\bar s$) holds on
$44.5\%$ of pair-configurations, four times more often than the certificate
fires, and no configuration was certified on at least half its pairs while
collapsing on fewer than half, as Proposition~\ref{prop:robust} requires
pairwise. It is also discriminating rather than uniformly pessimistic:
certificate rate and realized collapse rate have Spearman $\rho=0.66$ across
the $39$ configurations ($p\approx1\times10^{-5}$), and mean collapse rate
rises from $32\%$ to $57\%$ to $88\%$ across the configurations whose
certificate rate is below $5\%$, between $5\%$ and $25\%$, and above $25\%$.
On seed 44 it separates precisely the two configurations of
Figure~\ref{fig:seed44}: D1, the clean single-carrier collapse, is the
best-certified configuration in the entire set ($71.7\%$ of pairs certified,
$96.7\%$ collapsing), whereas DJ, the nested subset that reverses polarity, is
certified on $0.8\%$, with $|\bar q_S|=6.3$ against $|\bar s|=4.2$: exactly
the failure of hypothesis (i) that Corollary~\ref{cor:robustinvariance} shows
any polarity reversal must exhibit.

We report this as a mixed result and prefer not to round it in either
direction. The robust form is what makes Theorem~\ref{thm:collapse} applicable
to a real network at all, since the exact hypotheses never hold; but on these
instances it certifies a minority of configurations, and its practical value
here is as a diagnostic that ranks configurations by how close they are to the
idealized regime, not as a guarantee covering the behavior we observe.

\paragraph{The interaction term.} Finally we test Corollary~\ref{cor:exactness}
and Proposition~\ref{prop:readout} against the same $14$ configurations. For
each we measure two quantities. The \emph{interaction magnitude} is the
largest gap, over the probed inputs and in logit units, between the
frozen-activation prediction of the ablated selector and the selector of the
network whose weights were actually edited. For a \emph{jointly} ablated
head and MLP, the frozen-activation prediction holds the MLP's surviving
output at its \emph{clean-run} value (exactly the bookkeeping
Assumption~\ref{ass:stable} prescribes for $F_S(x)$), while the true
network recomputes it from the head-ablated residual; the gap between the
two is exactly what Proposition~\ref{prop:readout} predicts, $\psi(\Delta(x))$.
For a \emph{single} ablated carrier, the standard frozen-activation protocol
instead estimates the removed mass by patching that one carrier's own
contribution and letting the rest of the network recompute naturally, and
by Proposition~\ref{prop:patchedit} this reproduces the true weight edit node
for node, for either carrier alone and regardless of $\Delta$. The gap being
$\approx0$ on a single-carrier configuration is therefore a fact about that
measurement protocol, not, by itself, evidence that
$\Delta(x)=0$ there; Corollary~\ref{cor:exactness} guarantees
$\Delta(x)\equiv0$ only for the single-carrier configurations that ablate the
MLP. The \emph{agreement rate} (C1a in
Figure~\ref{fig:interaction}) is the fraction of inputs on which the idealized
model's predicted answer matches the edited network's actual answer; a second,
coarser criterion (C1b) asks whether the model's predicted inverted-$A$ and
literal-$B$ \emph{rates} match the observed ones to within $\pm0.10$. Both
thresholds ($0.90$ and $\pm0.10$) were fixed in the verification script before
any checkpoint was probed.

\paragraph{Is the single-carrier coincidence exact?} The argument above says
the frozen-activation protocol and the weight edit should agree exactly on a
single-carrier configuration, but the directions being removed are estimated
from data, and C1a on those configurations is not exactly $1$. Whether the
residual is the estimation error surfacing is a question about the pipeline's
own numerics, so we measured it rather than argued it, on $400$ held-out probe
inputs per instance drawn by the procedure above but with seeds distinct from
those behind any other number reported here. Over all five D1 configurations
the per-input gap between the frozen-activation selector and the selector of
the genuinely weight-edited network has median at most $9.6\times10^{-7}$ and
maximum $7.7\times10^{-6}$ logits, against a median selector magnitude of
$3.9$--$8.5$ logits on the same inputs: single-precision round-off, six orders
of magnitude below the signal.

No estimation error survives, and the reason is structural rather than
fortunate. The patch and the weight edit apply the \emph{same} estimated
projector, one to the carrier's activation and the other to its output weight
matrix; since the edit lies downstream of the activation it acts on, the two
produce the same write-in exactly, and whatever error is made in estimating
the subspace is made identically on both sides and cancels.
That this test could have failed is worth recording: re-estimating the
subspace on different data for the patch route \emph{alone} (the
configuration an estimation-error account describes) raises the same gap to
a median of up to $0.69$ and a maximum of $6.6$ logits. An estimation error of
the size these subspaces actually carry is thus easily large enough to see;
the protocol simply does not expose it.

The residual between C1a and $1$ therefore has a different and simpler source.
Across the $2000$ probe measurements there is not one input on which the
ablated network returned one of the two candidate readings and the idealized
model predicted the other; the entire residual is the $30$ inputs ($1.5\%$,
nearly all on seed 22) whose argmax fell on a \emph{third} token, outside the
binary the prediction ranges over. On a single-carrier configuration C1a is
thus exactly one minus the rate of such off-binary answers: a fact about
where the ablated network's argmax lands, not a measure of disagreement with
the idealized model. This reinforces rather than weakens the reading above:
the D1 points of Figure~\ref{fig:interaction} calibrate the measurement
protocol, and it is the joint configurations that carry the evidential weight.

\begin{figure}[htbp]
\centering
\includegraphics[width=.85\linewidth]{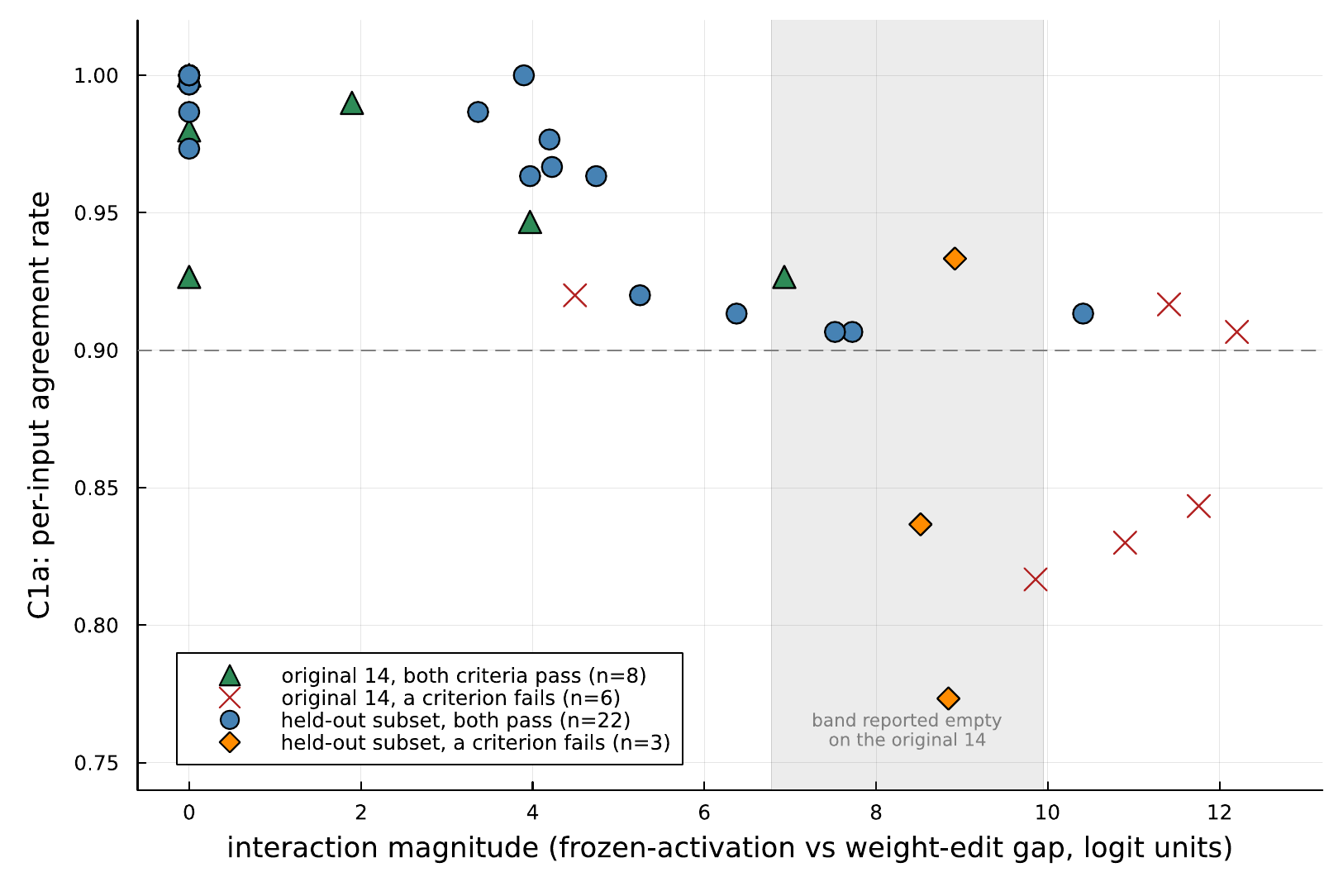}
\caption{Agreement rate (C1a) against measured interaction magnitude over all
$39$ (instance, subset) configurations, on probes held out from every other
number in this paper. Triangles and crosses are the $14$ configurations
originally reported (D1/DJ/DJA per instance); circles and diamonds are the
$25$ held-out subsets. The shaded region is the band that was empty on the
original $14$ and is used here only to show that it is \emph{not} empty on the
full set: $7$ configurations fall inside it. Passing configurations reach
interaction $10.41$ and failing ones fall to $4.49$, so the two groups
overlap. What survives is monotone rather than sharp: Spearman
$\rho=-0.83$ over all $39$ ($-0.85$ on the $25$ held-out ones alone), and the
pass rate over the configurations with nonzero interaction falls
$7/8\to6/8\to1/7$ across terciles of interaction magnitude. The five
single-carrier ablations sit at interaction $\approx0$ (at most
$8\times10^{-6}$ logits, see the text) for protocol reasons rather than
theoretical ones, and are not evidence either way.}
\label{fig:interaction}
\end{figure}

\paragraph{Does the separation hold out of sample?} The band above was read
off the $14$ configurations that produced it, which makes it descriptive, not
a validated threshold. Because the five checkpoints admit many more ablation
subsets than the three per instance reported, we could test it. We froze the
published band ($6.78$, $9.95$) and the decision rule (a configuration
passing both criteria must fall below the band, one failing either must fall
above it), and applied them unchanged to \emph{every} non-empty subset of
each instance's carrier set: $39$ configurations, of which $25$ had never been
used to fix anything, on a third disjoint set of probe seeds.

The rule fails. Nine of the $39$ configurations violate it, six of them
out-of-sample; seven land inside the supposedly empty band. Three of the
original $14$ change which side of the band they fall on when merely
re-measured on held-out probes, so the separation is not stable even in
sample. A split-half control on the same probes attributes part of this to
sampling noise ($3$ of $39$ configurations flip their pass/fail verdict
between the two halves, all of them sitting near the $0.90$ threshold), but
not the overlap itself, which is far too large for that.

The monotone relationship, on the other hand, is robust. Interaction magnitude
and C1a have Spearman $\rho=-0.83$ over all $39$ configurations (two-sided
permutation test, $p<10^{-5}$), $-0.85$ over the $25$ held-out ones alone
($p<10^{-5}$), and $-0.81$ over the $14$ that are both held-out and have
nonzero interaction ($p\approx7\times10^{-4}$), so the association is not an
artifact of the single-carrier configurations sitting at zero: if anything
it is slightly tighter on the held-out subsets than on the full set, the
opposite of what overfitting the original band would predict. We therefore
withdraw the claim of a clean separating threshold and keep the weaker one the
data support: the interaction magnitude (the same idealization gap
Theorem~\ref{thm:interaction} gives in closed form for a single head and its
own layer's MLP, measured here for every tested subset including the ones the
theorem does not cover analytically, such as carriers spanning more than one
layer) predicts, monotonically and with a large effect, how far the
idealized model of Sections~\ref{sec:model}--\ref{sec:dissociation} is from
the network whose weights were actually edited. That is still the qualitative content the theory
asserts (whether the idealization holds is governed by the size of a
quantity derived from the architecture alone, before any data-dependent
fitting), but it does not license reading a numerical cutoff off these
experiments.

Two things are worth separating from that conclusion. The \emph{measurement
gap} between the frozen-activation protocol and the true edit ($\le
8\times10^{-6}$ logits on all five single-carrier configurations, by the
patch-equals-weight-edit identity of Proposition~\ref{prop:patchedit} rather
than by Corollary~\ref{cor:exactness} whenever the ablated carrier is a head)
is not the same quantity as $\Delta(x)$ itself, which the corollary
guarantees absent only when the ablated carrier is the MLP. And the
thresholds themselves (C1a $\ge0.90$, C1b within $\pm0.10$) were fixed in the
verification script before any checkpoint was probed, so the failure above is
a failure of the band, not of a threshold chosen after the fact.

\subsection{A second task and architecture}
\label{sec:second-task}

Everything above illustrates the theory on one task and one family of
architectures. To check that the predictions are not an artifact of the
marker task's specific mechanism (an externally fixed derangement applied
to a displayed token, then read in a single, fixed direction), we built a
second, freshly trained instance with a different conditional mechanism and a
different architecture, and re-ran the two checks that do not themselves
presuppose a matched-pair, two-branch structure, plus a search for a
polarity-reversal analogue.

\paragraph{Task and architecture.} The context is $4$ key--value pairs from a
shared vocabulary of $12$ symbols (keys mutually distinct, values mutually
distinct, drawn independently), mixed in random order, followed by a marker
and a displayed token. Under $m_{\mathrm F}$ the displayed token is a pair's
key $k_t$ and the answer is its value $v_t$: ordinary associative recall.
Under $m_{\mathrm R}$ the displayed token is $v_t$ itself and the answer is
$k_t$: a genuine \emph{inverse} lookup, from value back to key, that
$m_{\mathrm F}$ never requires and that involves no externally fixed
permutation of any kind. The shared contrast axis is
$s_{\mathrm{op}}(x):=\mathrm{logit}(v_t)-\mathrm{logit}(k_t)$: format
$\mathrm F$ is correct when this favors $v_t$ (positive), format $\mathrm R$
when it favors $k_t$ (negative), structurally the same matched-pair
mechanism as $s_{\mathrm{op}}=\mathrm{logit}(\mathrm{lit})-\mathrm{logit}
(\mathrm{inv})$ in Section~\ref{sec:experiments}, but with the two roles
filled by a genuine forward/inverse recall rather than by a token relabeled
under a fixed $\sigma$. The network is a $3$-layer, $3$-head Llama-style
transformer, width $48$ (head dimension $16$), SwiGLU hidden width $96$,
smaller and shallower than the $4$-layer, $4$-head, width-$64$ networks
above, on a $14$-token vocabulary rather than $10$. It was trained fresh, once,
for this check (never on the checkpoints used elsewhere in this paper), with
the same AdamW/warmup recipe as Appendix~\ref{app:repro}, and reached
$99.5\%$/$98.5\%$ accuracy on formats $\mathrm F$/$\mathrm R$ after $2750$ of
a $6000$-step budget: the same qualitative training dynamic
(a long plateau below $35\%$ followed by a rapid transition) as the marker
task's own instances.

\paragraph{Carriers.} The same patch-recovery sweep and $r\ge0.25$ threshold
used throughout this paper finds only two carriers on this checkpoint, both
MLPs (layer $1$, $r=0.88$; layer $2$, $r=0.35$), giving $3$ non-empty subsets:
too few for the interaction/fidelity relationship below to mean anything
statistically. The full ranked sweep shows why: after those two, recovery
drops to $0.19$ (layer $3$'s MLP) and $0.18$ (a layer-$1$ head), then drops
again, by more than a factor of two, to $0.09$ and below for every remaining
site. We report both readings rather than picking one: at the paper's own
threshold, this task concentrates onto two MLPs and nothing else passes; at
$r\ge0.15$ (still twice the next site down, not a threshold tuned to reach
a target count) four carriers survive, spanning all three layers and
including one head, giving $15$ non-empty subsets. The three checks below use
this second, wider carrier set, since it is the one that can actually support
them; where the two readings disagree, both are reported.

\paragraph{Patch-equals-weight-edit exactness.} On the single strongest
carrier (layer $1$'s MLP, $r=0.88$) and $120$ fresh probes, the gap between
the frozen-activation patch and the genuinely weight-edited network has
median $9.5\times10^{-7}$ and maximum $3.8\times10^{-6}$ logits, against a
median selector scale of $6.99$, the same six-orders-of-magnitude,
single-precision-floor gap as the five D1 configurations of
Section~\ref{sec:experiments} (Proposition~\ref{prop:patchedit} makes no
reference to the task), and the falsifiability control (the same protocol
with the patch route's subspace re-estimated on disjoint data) again shows
what a genuine mismatch would look like: median $2.44$, maximum $11.6$
logits, three orders of magnitude larger.

\paragraph{Interaction versus fidelity.} Over the $15$ configurations at
$r\ge0.15$, interaction magnitude and the agreement rate C1a are
anti-correlated, Spearman $\rho=-0.90$ (two-sided permutation test,
$p\approx3\times10^{-5}$), the same relationship, on a different task and
a smaller, differently shaped network, that Section~\ref{sec:experiments}
reports at $\rho=-0.83$ over its own $39$ configurations. (At $r\ge0.25$
the same relationship is present on the $3$ available configurations,
$\rho=-1.0$, but three points do not constitute independent evidence of a
monotone relationship on their own; we report it for completeness, not as a
replication.) The off-binary (``other'') rate is higher on this task, up to
$47.5\%$ on the largest and one other joint configuration against $1.5\%$ on
the marker task's D1 points, which is consistent with a smaller, shallower network
producing more answers outside the two candidate readings once several
carriers are ablated together, and is exactly the kind of number
Section~\ref{sec:experiments} already argues C1a's residual should track.

\paragraph{A polarity-reversal analogue.} Scanning every nested pair of the
$15$ configurations against the $60$ probed pairs finds eight instances,
across two distinct probe pairs, in which enlarging an ablated subset that
already collapses a pair cleanly reverses which branch it collapses onto:
the same phenomenon Figure~\ref{fig:seed44} illustrates on the marker task.
The cleanest is layer-internal, exactly as on seed 44: ablating layer $1$'s
MLP alone collapses one probed pair onto one branch, while ablating that same
MLP \emph{together with} layer $1$'s own head collapses it onto the other.
A second instance spans more than one layer, adding layer $2$'s MLP to an
already-collapsing layer-$1$-and-layer-$3$ pair to reverse the branch; it is
not covered by this paper's own two-carrier, single-block composition
(Theorem~\ref{thm:interaction}), and extending the analysis to carriers at
different layers is left to future work. By Corollary~\ref{cor:invariance},
this is again evidence that at least one of each pair of subsets fails the
exact collapse hypotheses, consistent with the idealized model rather than
contradicting it.

\paragraph{What this does and does not establish.} All three checks land the
same way they do on the marker task, which is the point of running them: the
predictions are not an artifact of one conditional mechanism or one
architecture shape. It remains a single trained instance, not five, and the
carrier count needed widening past this paper's own stated threshold to give
the second check enough points to be informative: both are limitations we
would rather state than round past. We did not attempt a second
out-of-sample band test or a second robust-collapse-availability measurement
here: both are properties of \emph{how much data} a claim was checked
against, which a single additional instance cannot settle either way, and
re-running them here would not add evidence beyond what Section~\ref{sec:experiments}
already reports on that question.

\section{Discussion}
\label{sec:discussion}

\paragraph{What is established, and how far it reaches.} The flip criterion
(Lemma~\ref{lem:flip}), the collapse theorem (Theorem~\ref{thm:collapse}) and
the dissociation theorem (Theorem~\ref{thm:dissociation}) rest on
Assumption~\ref{ass:readout} and on the additive decomposition of
Section~\ref{sec:model}, and on nothing else. In particular none of them uses
any property of transformers: they hold for any residual computation with a
linear readout, and their content is arithmetic on the two numbers
$\beta_i\delta_i$ and $\beta_i\alpha_i(x_B)$ that the two interventions
respectively move. Proposition~\ref{prop:robust} removes the objection that
the collapse criterion is stated as an exact equality and therefore never
applies: the error is an identity, not an approximation, and the criterion has
a tolerance form. What that form buys on real weights is measured, and
reported honestly as partial, in Section~\ref{sec:experiments}. What they do
\emph{not} establish is that a real network
satisfies the decomposition; that is the role of
Theorem~\ref{thm:interaction}, which computes the error exactly for the
composition in which the independence of carriers is architecturally false,
and of
Corollary~\ref{cor:exactness}, which shows the failure is identically absent
whenever the ablated carrier is the MLP, and, by
Proposition~\ref{prop:readout}, exactly bounded whenever it is not, a bound
whose governing constant is in fact computable in closed form from the
trained weights alone, which a companion analysis to this one exhibits
explicitly. The
scope of the idealized model is therefore not assumed but delimited from
inside it.

What the experiments establish is weaker than what a first pass suggested, and
we prefer to say so in both places. The interaction magnitude tracks the
idealized model's accuracy monotonically and strongly across $39$ ablation
configurations, but the clean numerical separation visible on the first $14$
is not reproducible on the other $25$. A reader should take from
Section~\ref{sec:experiments} that the theory identifies the right
\emph{quantity}, not that it supplies a usable decision threshold; supplying
one would require establishing that the cutoff transfers across networks,
which these five instances do not show.

\paragraph{What remains open.} The theory answers \emph{when} collapse and
dissociation occur given a specific ablated subset, but not the converse
question of \emph{which} subset to choose in order to realize a target
outcome: Corollary~\ref{cor:invariance} shows that any two idealized-model-satisfying
subsets must agree, but gives no procedure for constructing one, and does not
address whether a subset can be \emph{designed}, from measurable quantities
alone, to steer the collapse polarity at will. Two further gaps are
specific to Theorem~\ref{thm:interaction}: the analysis is not extended past the block in which
the two carriers reside, though Remark~\ref{rem:propagation} identifies the
two mechanisms (normalization-driven attenuation and linear-map
amplification) through which a quantitative account would proceed;
and multi-layer carrier interactions are not covered by the two-carrier
composition considered here. Each is a well-posed mathematical question
rather than an open-ended empirical one, and each is a natural target for the
idealized model of Section~\ref{sec:model} to be extended rather than
replaced. A fourth gap is empirical rather than mathematical: every result of
Section~\ref{sec:experiments} is illustrated on a small transformer trained
on a synthetic conditional task, chosen because it makes the two branches of
the conditional and the ground-truth answer unambiguous by construction.
Mechanistic interpretability is ultimately concerned with models trained on
natural language, and nothing in Sections~\ref{sec:model}--\ref{sec:interaction}
is specific to the synthetic task; extending the theory past a single
residual block and testing it against an emergent circuit in a real
pretrained model is exactly the direction a companion analysis takes, and we
leave that extension, and its own genuinely mixed result, to it rather than
duplicate it here. The out-of-sample failure reported in
Section~\ref{sec:experiments} nonetheless sharpens what any such extension
would have to show. It is not enough to exhibit a correlation between
interaction magnitude and idealization error; that already replicates on
this paper's own synthetic instances. The open question is whether any
threshold on that magnitude transfers, between configurations or between
networks, and our five instances answer it negatively at the only scale we
tested here, which makes the question more interesting rather than less.

\section{Conclusion}
\label{sec:conclusion}

Activation patching and weight-space ablation are not two measurements of one
causal quantity. They are two operators on two different objects, and the
useful question is not whether they agree but exactly when, and by how much
they fail to when they do not. Within one idealized model this becomes a
matter of arithmetic: patching moves a contrast, ablation removes a level,
and the collapse theorem of Section~\ref{sec:collapse} says precisely which
subsets of carriers force a conditional onto a single branch and which branch
survives.

The practical reading is that a negative ablation result is not, on its own,
evidence that a component is unimportant, and a positive patching result is
not, on its own, evidence that it is necessary, not as a caution about
noisy measurement, but because the two experiments interrogate different
quantities that a redundant code routinely separates. Where the idealization
itself fails, the failure is not noise either: it is an interaction term of
known order, with a provable zero whenever the ablated carrier is an MLP and
an exact second-order bound otherwise whose constant is computable from the
trained weights, and it is measurable on the network one is actually studying.

Measurable is not the same as calibrated, and we close on that distinction
rather than blur it. Across thirty-nine ablation configurations on five
trained networks the interaction magnitude orders the idealization's
accuracy strongly and reproducibly, but the threshold that appeared to
separate success from failure on the first fourteen did not survive the
other twenty-five: the quantity singled out by the theory is the right one,
how to read a numerical cutoff off it is not yet settled, and we would rather
leave that explicit than let a tidy figure imply otherwise.

\appendix

\section{Experimental Details and Reproducibility}
\label{app:repro}

Section~\ref{sec:experiments} is deliberately compressed; this appendix gives
what is needed to reproduce it.

\paragraph{Task.} Key and value tokens are drawn from a vocabulary of
$V=8$ symbols, with two additional marker tokens $m_A,m_B$, so the model's
vocabulary has $10$ entries. Each sequence presents $3$ key--value pairs in
random order ($6$ tokens) followed by a marker and a displayed token, giving
sequence length $8$; the target is the value of the true key, read at the
final position. The derangement $\sigma$ is drawn once from a fixed
pseudorandom seed ($42$) and is \emph{identical across all instances}, so only
the trained model varies between seeds.

\paragraph{Architecture and training.} Each instance is a $4$-layer decoder
transformer in the Llama style: model width $d=64$, $4$ attention heads
(head dimension $16$), SwiGLU MLP with hidden width $128$, pre-normalization
by RMSNorm with $\varepsilon=10^{-6}$, learned positional embeddings, weights
in float32. There is no final normalization: the unembedding is applied
directly to the residual stream, so the logit contrast is an exactly affine
functional of $F(x)$ and Assumption~\ref{ass:readout} holds exactly rather
than approximately for these networks. Training is $5000$ optimizer steps at batch $64$ with AdamW
($\beta_1=0.9$, $\beta_2=0.999$, $\varepsilon=10^{-8}$), peak learning rate
$10^{-3}$, linear warmup over $250$ steps then cosine decay to
$0.1\times$ peak. Only instances reaching accuracy $\ge0.95$ on \emph{both}
formats are analyzed; this gate is applied before any ablation. Four
instances use initialization seed $S\in\{11,22,33,44\}$ with training-data
seed $1000S+123$; the fifth (``inst.\ 2'') was trained earlier under the same
configuration with a different seed. No model was retrained for any
measurement reported here: all of them read the same five checkpoints,
with one exception: the second task and architecture of
Section~\ref{sec:second-task} is a genuinely separate, freshly trained
instance (\texttt{notebook/\allowbreak bidir\_recall\_task\_experiment.jl},
initialization seed $1$, training seed $123$, $2750$ steps at batch $64$ before the
$\ge0.97$/$\ge0.97$ early-stop gate, otherwise the same optimizer recipe),
never used for any other number in this paper.

\paragraph{Carrier selection.} The candidate set is every attention head and
every MLP output, $4\times(4+1)=20$ sites. On $8$ matched clean pairs (seed
$4242$) each site is patched from donor to receiver in turn and scored by the
normalized recovery of the literal-versus-inverted logit contrast; $r$ denotes
the mean over pairs. \textbf{A site is a carrier when $r\ge0.25$.} The
\emph{carrier layer} is the shallowest layer containing a carrier. The three
configurations of Section~\ref{sec:experiments} are D1 (the highest-$r$
carrier in the carrier layer, alone), DJ (all carriers in the carrier layer),
and DJA (all carriers); DJA is omitted when it coincides with DJ, which is why
seed 11 contributes two configurations rather than three.

\paragraph{Ablation subspaces.} For each carrier, donor--receiver activation
differences are collected at the last two sequence positions over $32$ matched
pairs (seed $1618$), giving $64$ vectors. Their SVD determines
$U$ as the leading left singular vectors, taking \textbf{the smallest $k$
reaching $90\%$ of the squared-singular-value energy, capped at $k\le4$ for a
head and $k\le8$ for an MLP.} The weight edit is
$W_O[:,\mathcal H]\leftarrow W_O[:,\mathcal H](I-UU^\top)$ for a head with
column block $\mathcal H$, and $W_2\leftarrow(I-UU^\top)W_2$ for an MLP. These
are the concrete instances of \eqref{eq:ablation} referred to in
Remark~\ref{rem:bridge}, with the directions estimated from data.

\paragraph{Evaluation criteria.} C1a and C1b are as defined in
Section~\ref{sec:experiments}; the thresholds $0.90$ and $\pm0.10$ were fixed
in the verification script before any checkpoint was probed and were not
revised. Rates are computed on filtered distributions on which the systematic
wrong answer is a well-defined token distinct from the correct one.

\paragraph{Probe seeds.} Every measurement added after the original run uses
probe seeds disjoint from it and from each other, so that no two reported
numbers share an evaluation sample:
\begin{itemize}
\item the original configuration run uses $90210/31337$;
\item the single-carrier protocol-gap measurement uses $20260726/20260727$
($400$ probe inputs per instance);
\item the out-of-sample band test uses $20260728/20260729$
($300$ per configuration);
\item the robust-hypothesis measurement uses $20260801/20260802$
($120$ matched pairs per configuration);
\item the second task and architecture of Section~\ref{sec:second-task} uses
its own carrier-sweep seed $53421$, delta seed $91827$, and probe seeds
$20260810/20260811$ ($60$ matched pairs), all independent of every seed above
and of each other, since it is a separate checkpoint with no other number in
this paper to stay disjoint from.
\end{itemize}
The permutation tests use $2\times10^5$ resamples.

\paragraph{Code and data availability.} All experiments run against the
research repository accompanying this submission; no external data is used and
no model is downloaded. Each script writes a JSON results file that the
figure scripts read directly, so no number in a figure is transcribed by
hand. The relevant files are:
\begin{itemize}
\item \texttt{notebook/marker\_task\_experiment.jl} --- task, architecture,
training.
\item \texttt{notebook/marker\_seed\_matrix.jl} --- per-seed training and
carrier sweep.
\item \texttt{notebook/marker\_conj1\_verify.jl} --- the original $14$
configurations, C1a/C1b and interaction magnitude.
\item \texttt{notebook/verify\_marker\_interaction\_theorem.jl} --- numerical
check of Theorem~\ref{thm:interaction}.
\item \texttt{notebook/verify\_d1\_protocol\_gap.jl} ---
Proposition~\ref{prop:patchedit}, the single-carrier gap.
\item \texttt{notebook/verify\_config\_band\_oos.jl} --- the
$39$-configuration out-of-sample test.
\item \texttt{notebook/verify\_robust\_collapse.jl} ---
Proposition~\ref{prop:robust}, the $\bar q_S$/$\Psi_S$/$\bar s$ measurement.
\item \texttt{notebook/verify\_interaction\_spearman.jl} --- the Spearman
correlations and permutation $p$-values reported in Section~\ref{sec:experiments},
from the JSON output of the two preceding scripts, computed with tie-corrected
rank correlation matching, respectively:
\begin{itemize}
\item \texttt{StatsBase.corspearman} (Julia);
\item \texttt{scipy.stats.spearmanr} (Python);
\item R's \texttt{cor(method="spearman")}.
\end{itemize}
An order-dependent,
non-tie-corrected rank statistic reproduces the values reported in an earlier
draft of this section and is retained in the script as a documented control.
\item \texttt{notebook/bidir\_recall\_task\_experiment.jl} --- the second
task's architecture, training, and checkpoint of Section~\ref{sec:second-task}.
\item \texttt{notebook/verify\_bidir\_replication.jl} --- its three checks,
run at both carrier thresholds discussed there.
\end{itemize}
The five marker-task checkpoints are stored under
\texttt{notebook/marker\_ckpt/}, the second task's checkpoint under
\texttt{notebook/bidir\_ckpt/}.

\end{document}